\documentclass{article} 
\usepackage{iclr2022_conference,times}


\usepackage{amsmath,amsfonts,bm}









\def\eqref#1{equation~\ref{#1}}









\def\1{\bm{1}}










\DeclareMathAlphabet{\mathsfit}{\encodingdefault}{\sfdefault}{m}{sl}
\SetMathAlphabet{\mathsfit}{bold}{\encodingdefault}{\sfdefault}{bx}{n}











\newcommand{\R}{\mathbb{R}}



\usepackage{hyperref}
\usepackage{url}
\usepackage{amsmath}
\usepackage{lscape}
\usepackage{subfiles} 

\usepackage{multirow}
\usepackage{wrapfig}

\usepackage{nicematrix}
\usepackage{adjustbox}




\usepackage[utf8]{inputenc} 
\usepackage[T1]{fontenc}    
\usepackage{bbm, dsfont}
\usepackage{url}            
\usepackage{booktabs}       
\usepackage{amsfonts}       
\usepackage{nicefrac}
\usepackage{wrapfig}
\usepackage{microtype}      
\usepackage{amsmath, amsthm} 
\usepackage{amsfonts,amssymb}
\usepackage[linesnumbered,ruled,vlined]{algorithm2e}

\usepackage{nicefrac}       
\usepackage{microtype}      

\usepackage{xcolor}
\usepackage{graphicx}
\usepackage{wrapfig,lipsum}
\usepackage{subcaption}
\usepackage{thmtools, thm-restate}
\usepackage{mathrsfs} 
\usepackage{chngpage}
\usepackage{caption}

\usepackage{lscape}
\usepackage{subfiles} 
\usepackage{hyperref}
\usepackage{natbib}
\graphicspath{{./resources}}


\declaretheorem[name=Theorem,refname=Theorem]{theorem}
\declaretheorem[name=Lemma,sibling=theorem]{lemma}

\declaretheorem[name=Proposition,refname=Proposition,sibling=theorem]{proposition}
\declaretheorem[name=Remark,sibling=theorem]{remark}

\declaretheorem[name=Definition,refname=Definition]{definition}

%



\newcommand{\tr}{\sharp_{\Delta}}
\newcommand{\sqi}{\sharp_{\square}^{i}}
\newcommand{\sqj}{\sharp_{\square}^{j}}
\newcommand{\sqo}{\sharp_{\square}^{\mathfrak{m}}}
\newcommand{\lmax}{\gamma_{\text{max}}}
\newcommand{\Ric}{\text{Ric}}

\DeclareUnicodeCharacter{2212}{-}

\title{Understanding over-squashing and\\ bottlenecks on graphs via curvature}

\newcommand{\aff}[1]{\textsuperscript{\normalfont #1}}
\author{Jake Topping\aff{1}\aff{2}\aff{$\dagger$},\, Francesco Di Giovanni\aff{3}\aff{$\dagger$}, Benjamin P.  Chamberlain\aff{3},\And Xiaowen Dong\aff{1}, and Michael M. Bronstein\aff{2}\aff{3} \\
\textsuperscript{1}University of Oxford\;
\textsuperscript{2}Imperial College London\;
\textsuperscript{3}Twitter
}

%

\iclrfinalcopy 
\begin{document}
\renewcommand{\thefootnote}{$\dagger$}
\footnotetext{Equal contribution. Theory and proofs developed by the second named author.}
\renewcommand*{\thefootnote}{\arabic{footnote}}

\maketitle

\begin{abstract}
Most graph neural networks (GNNs) use the message passing paradigm, in which node features are propagated on the input graph.  
Recent works pointed to the distortion of information flowing from distant nodes as a factor limiting the efficiency of message passing for tasks relying on long-distance interactions. This phenomenon, referred to as `over-squashing', has been heuristically attributed to graph bottlenecks where the number of $k$-hop neighbors grows rapidly with  $k$. We provide a precise description of the over-squashing phenomenon in GNNs and analyze how it arises from bottlenecks in the graph.
For this purpose, we introduce a new 
edge-based combinatorial curvature 
and prove that negatively curved edges are responsible for the over-squashing issue. We also propose and experimentally test a  curvature-based graph rewiring method to alleviate the over-squashing. 


\end{abstract}

\section{Introduction}
\label{sec:1_introduction}

\begin{wrapfigure}{r}{0pt}
\vspace{-1.5cm}
\centering
\begin{tabular}{c}
  \begin{subfigure}[t]{0.47\textwidth}
	\includegraphics[width=\textwidth]{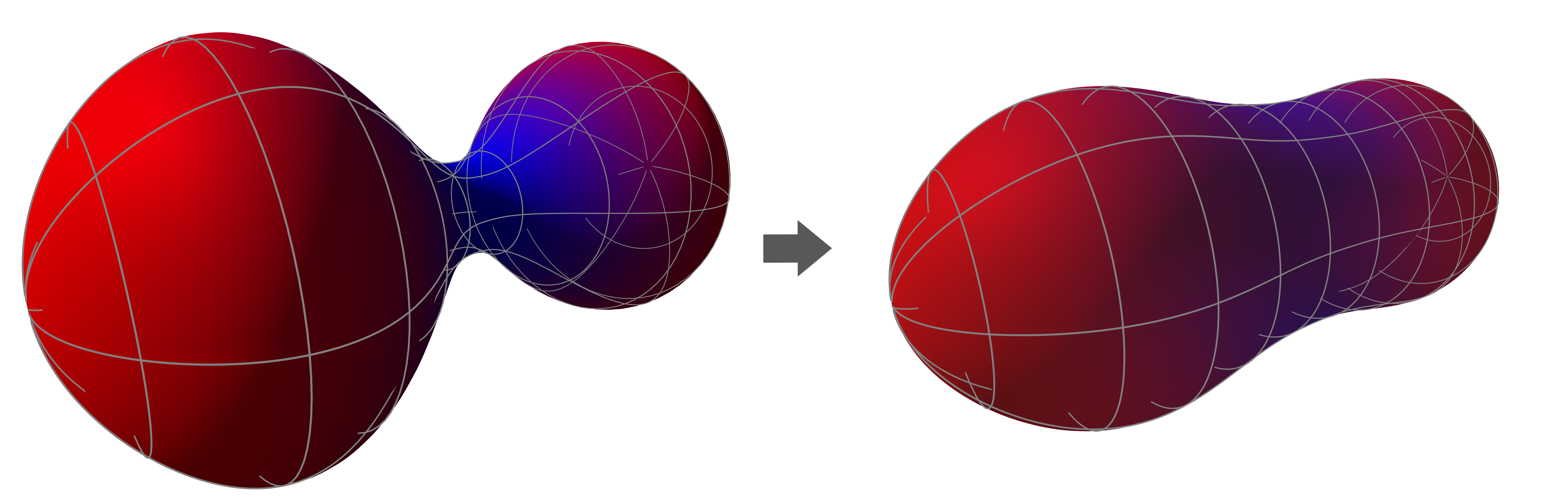}
     \end{subfigure} \\
\centering
  \begin{subfigure}[b]{0.52\textwidth}
	\includegraphics[width=\textwidth]{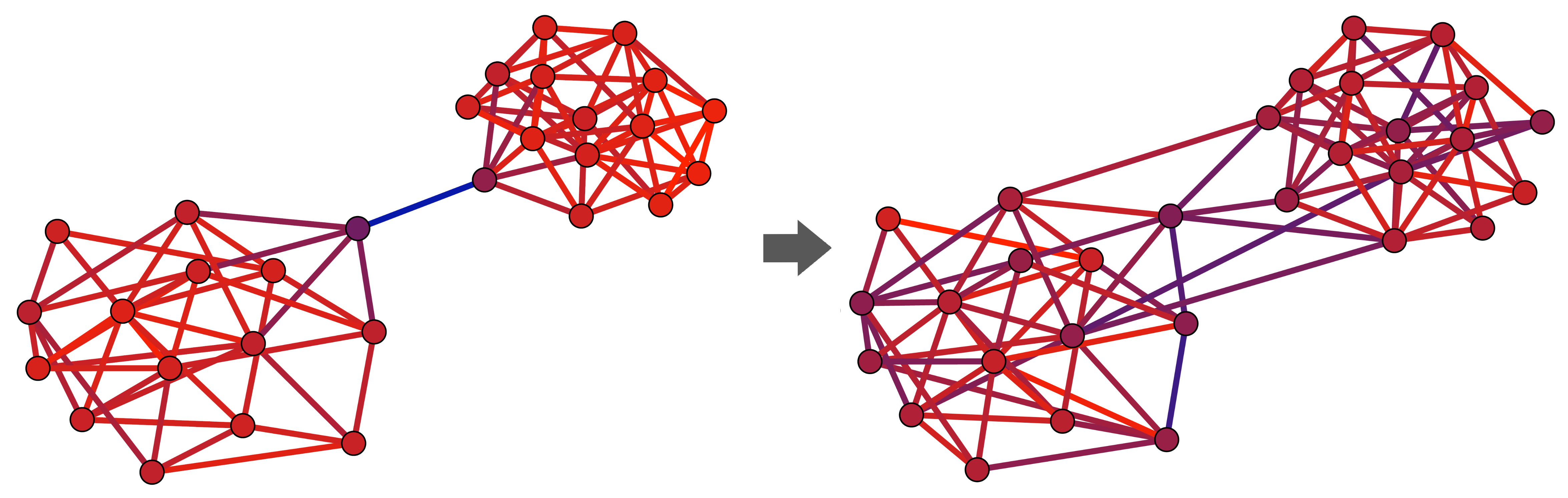}
     \end{subfigure}
\end{tabular}
\vspace{-8mm}
\caption{Top: evolution of curvature on a surface may reduce the bottleneck. Bottom: this paper shows how the same may be done on graphs to improve GNN performance. Blue/red shows negative/positive curvature.}\vspace{-6.5mm}
\label{fig:intro_surface_graph}
\end{wrapfigure}

In the past few years, deep learning on graphs and in particular graph neural networks (GNNs) \citep{sperduti1994encoding,goller1996learning,sperduti1997supervised,frasconi1998general,gori2005new,scarselli2008graph,bruna2013spectral,defferrard2016convolutional,kipf2016semi,gilmer2017neural} have become very popular in the machine learning community due to their ability to deal with broad classes of systems of relations and interactions. 



The vast majority of GNNs follow the {\em message passing} paradigm \citep{gilmer2017neural}, using learnable non-linear functions to diffuse information on the graph.  Multiple popular GNN architectures such as GCN \citep{kipf2016semi} and GAT \citep{velivckovic2017graph} can be posed as particular flavors of this scheme and considered instances of a more general framework of geometric deep learning \citep{gdlbook}.

Some of the drawbacks of the message passing paradigm have now been identified and formalized, including the limits of  expressive power \citep{xu2018how,morris2019weisfeiler,maron2019provably} and the problem of over-smoothing \citep{NT2019,Oono2019}. On the other hand, much less is known about the phenomenon of \emph{over-squashing}, consisting in the distortion of messages being propagated from distant nodes. \citet{alon2020bottleneck} proposed rewiring the graph as a way of reducing the {\em bottleneck}, defined as those topological properties in the graph leading to over-squashing. This approach is in line with multiple other results e.g. using connectivity diffusion \citep{klicpera2019diffusion} as a preprocessing step to facilitate graph learning. Yet, the exact understanding of the over-squashing and how it originates from the bottlenecks in the topology of the underlying graph are still elusive. Consequently, there is currently no consensus on the right method (either based on graph rewiring or not) to address the bottleneck and hence alleviate the over-squashing.

In this paper, we address these questions using tools from differential geometry, which traditionally is concerned with the study of manifolds. It offers an appealing framework to study the properties of graphs, in particular arguing that graphs, like manifolds,  exhibit {\em  curvature} that makes them more suitable to be realized in spaces with hyperbolic geometry 
\citep{liu2019hyperbolic,chami2019hyperbolic,boguna2021network}. 
One notion of curvature that has received attention for graph learning is {\em Ricci curvature} \citep{hamilton1988ricci}, also known in geometry for its use in Ricci flow and the subsequent proof of the Poincaré conjecture \citep{perelman2003finite}. 
Certain graph analogues of the Ricci curvature  \citep{forman2003discrete,ollivier2009ricci,Sreejith_2016} 
were used in \citet{ni2018network} for a discrete version of Ricci flow  
to construct a metric between graphs. Graph Ricci flow was also used in  \citet{ni2019community} for community detection. Both of these methods use the edge weights as a substitute for the metric of a manifold, and do not change the topological structure of the graph. 

\paragraph{Contributions and Outline.}
This paper, to our knowledge, is the first theoretical study of the bottleneck and over-squashing phenomena in message passing neural networks from a geometric perspective.
%
%
In Section~\ref{sec:analysis_bottleneck}, we propose the Jacobian of node representations 
as a formal way of measuring the over-squashing 
and we show that the graph topology may compromise message propagation in graph neural networks by creating a bottleneck.
%
In Section~\ref{sec:negcurv}, we investigate how such a bottleneck is induced which leads to the over-squashing of information. To this aim, we introduce a new combinatorial edge-based curvature called Balanced Forman curvature that constitutes a sharp lower bound to the standard Ollivier curvature on graphs, and prove that negatively curved edges are responsible for the formation of bottlenecks (and hence for over-squashing).
%
In Section~\ref{sec:rewiring}, we present a new curvature-based method for graph rewiring called Stochastic Discrete Ricci Flow. According to the theoretical results in Section~\ref{sec:negcurv}, this rewiring method is suited to address the graph bottleneck and hence alleviate the over-squashing by surgically targeting the edges responsible for the issue. By contrast, we rigorously show that a recently introduced diffusion-based rewiring scheme might generally fail to reduce the bottleneck.
Finally, in Section~\ref{sec:experiments}, we compare different rewiring strategies experimentally on several standard graph learning datasets.

\section{Analysis of the over-squashing phenomenon}
\label{sec:analysis_bottleneck}
\subsection{Preliminaries}
Let $G = (V,E)$ be a simple, undirected, and connected graph, where $(i,j) \in E$ iff $i \sim j$. We focus on the unweighted case, although the theory extends to the weighted setting as well. We denote the adjacency matrix by $A$ and let $\tilde{A} = A + I$ be the adjacency matrix augmented with self-loops. Similarly we let $\tilde{D} = D + I$, with $D$ the diagonal degree matrix, and let $\hat{A} = \tilde{D}^{-\frac{1}{2}}\tilde{A}\tilde{D}^{-\frac{1}{2}}$ be the normalized  augmented adjacency matrix (self-loops are commonly included in GNN architecture, and in Section \ref{sec:analysis_bottleneck} we formally explain why GNNs are expected to propagate information more reliably when self-loops are taken into account).  
 Given $i\in V$, we denote its degree by $d_{i}$ 
 and let 
\[
S_{r}(i):=\{j\in V: d_{G}(i,j) = r\}, \,\,\,\,\,\,\, B_{r}(i):=\{j\in V: d_{G}(i,j)\leq r\},
\]
\noindent where $d_{G}$ is the standard 
shortest-path distance on the graph and $r\in\mathbb{N}$. The set $B_{r}(i)$ represents the \emph{receptive field} of an $r$-layer message passing neural network at node $i$. 

\paragraph{Message passing neural networks (MPNNs).}
Assume that the graph $G$ is equipped with node features $X\in\R^{n\times p_{0}}$ where $x_{i}\in\R^{p_{0}}$ is the feature vector at node $i = 1,\hdots, n = \lvert V \rvert$. 
We denote by $h_{i}^{(\ell)}\in \mathbb{R}^{p_{\ell}}$ the representation of node $i$ at layer $\ell \geq 0$, with $h_{i}^{(0)} = x_{i}$. %
Given a family of message functions $\psi_{\ell}:\mathbb{R}^{p_{\ell}}\times \mathbb{R}^{p_{\ell}} \rightarrow \mathbb{R}^{p_{\ell}'}$
and update functions $\phi_{\ell}:\mathbb{R}^{p_{\ell}}\times \mathbb{R}^{p_{\ell}'} \rightarrow \mathbb{R}^{p_{\ell + 1}}$, we can write the $(\ell + 1)$-st layer output of a generic MPNN as follows \citep{gilmer2017neural}: \vspace{-1mm}
\begin{equation}\label{eq:MPNN}
h_{i}^{(\ell+1)} = \phi_{\ell}\left(h_{i}^{(\ell)}, \sum_{j = 1}^{n} \hat{A}_{ij}\psi_{\ell}(h_{i}^{(\ell)},h_{j}^{(\ell)})\right).
\vspace{-1mm}
\end{equation}
\noindent Here we have used the augmented normalized adjacency matrix to propagate messages from each node to its neighbors, which simply leads to a degree normalization of the message functions $\psi_{\ell}$. To avoid heavy notations the node features and representations are assumed to be scalar from now on; these assumptions simplify the discussion and the vector case leads to analogous results.




\subsection{The over-squashing problem}

Multiple recent papers observed that MPNNs tend to perform poorly in situations when the learned task requires long-range dependencies and at the same time the structure of the graph results in exponentially many long-range neighboring nodes. 
%
We say that a graph learning problem has \emph{long-range dependencies} when the output of a MPNN depends on representations of distant nodes interacting with each other. If long-range dependencies are present, messages coming from non-adjacent nodes need to be propagated across the network without being too distorted. In many cases however (e.g. in `small-world' graphs such as social networks), the size of the receptive field $B_{r}(i)$ grows exponentially with $r$. If this occurs, representations of exponentially many neighboring nodes need to be compressed into fixed-size vectors to propagate messages to node $i$, causing a phenomenon referred to as \emph{over-squashing} of information \citep{alon2020bottleneck}. In line with \citet{alon2020bottleneck}, we refer to those structural properties of the graph that lead to over-squashing as a \emph{bottleneck}\footnote{We note that the over-squashing issue is different from the problem of under-reaching; the latter 
simply amounts to a MPNN failing to fully explore a graph when the depth is smaller than the diameter \citep{barcelo2019logical}.
The over-squashing phenomenon instead may occur even in deep GNNs with the number of layers larger than the graph diameter, as tested experimentally in \citet{alon2020bottleneck}.}.

\paragraph{Sensitivity analysis.} 
The hidden feature $h_{i}^{(\ell)} = h_{i}^{(\ell)}(x_{1},\ldots,x_{n})$ computed by an MPNN with $\ell$ layers as in \eqref{eq:MPNN} is a differentiable function of the input node features $\{x_1, \hdots, x_n\}$ as long as the update and message functions $\phi_{\ell}$ and $\psi_{\ell}$ are differentiable. The over-squashing of information can then be understood in terms of one node representation $h_{i}^{(\ell)}$ failing to be affected by some input feature $x_{s}$ of node $s$ at distance $r$ from node $i$. Hence, we propose the \emph{Jacobian} $\partial h_{i}^{(r)} / \partial x_{s}$ 
as an explicit and formal way of assessing the over-squashing effect\footnote{The Jacobian of a GNN-output was also used by \citet{xu2018representation} to set a similarity score among nodes.}.
\begin{restatable}{lemma}{jacobianlemma}\label{lemma:boundgeneralMPNN}
Assume an MPNN as in \eqref{eq:MPNN}. Let $i,s\in V$ with $s\in S_{r+1}(i)$. If $\lvert \nabla \phi_{\ell}\rvert \leq \alpha$ and $\lvert \nabla \psi_{\ell}\rvert \leq \beta$ for $0 \leq \ell \leq r$, then\vspace{-1mm}
\begin{equation}\label{eq:boundJacobian}
\left\vert \frac{\partial h_{i}^{(r+1)}}{\partial x_{s}}\right\vert \leq (\alpha\beta)^{r+1}(\hat{A}^{r+1})_{is}.\vspace{-1mm}
\end{equation}

\end{restatable}

Lemma \ref{lemma:boundgeneralMPNN} states that if $\phi_{\ell}$ and $\psi_{\ell}$ have bounded derivatives, then the propagation of messages is controlled by a suitable power of $\hat{A}$. For example, if $d_{G}(i,s) = r+1$ and the sub-graph induced on $B_{r+1}(i)$ is a binary tree, then $(\hat{A}^{r+1})_{is} = 2^{-1}3^{-r}$, which gives an exponential decay of the node dependence on input features at distance $r$, as also heuristically argued by \citet{alon2020bottleneck}. 

The sensitivity analysis in Lemma \ref{lemma:boundgeneralMPNN} relates the over-squashing -- as measured by the Jacobian of the node representations -- to the graph topology via powers of the augmented normalized adjacency matrix. In the \hyperref[sec:negcurv]{next section} we explore this connection further by analyzing which local properties of the graph structure affect the right hand side in \eqref{eq:boundJacobian}, hence causing the bottleneck. We will address this problem by introducing a new combinatorial notion of edge-based curvature and showing that \emph{negatively curved edges are those responsible for the over-squashing phenomenon}. 

\section{Graph curvature and bottleneck}
\label{sec:negcurv}
A natural object in 
Riemannian geometry is the {\em Ricci curvature}, a bilinear form 
determining the geodesic dispersion, i.e. whether geodesics
starting at nearby points with `same' velocity remain parallel (\emph{Euclidean space}), converge (\emph{spherical space}), or diverge (\emph{hyperbolic space}). To motivate the introduction of a Ricci curvature for graphs, we focus on these three cases. Consider two nodes $i\sim j$ and two edges starting at $i$ and $j$ respectively. In a discrete spherical geometry (Figure~\ref{fig:regimes_curvatures}a), the edges would meet at $k$ to form a triangle (complete graph). In a discrete Euclidean geometry (Figure~\ref{fig:regimes_curvatures}b), the edges would stay parallel and form a 4-cycle based at $i\sim j$ (orthogonal grid). Finally, in a discrete hyperbolic geometry (Figure~\ref{fig:regimes_curvatures}c), the mutual distance of the edge endpoints would have grown compared to that of $i$ and $j$ (tree). Therefore, a Ricci curvature for graphs should provide us with more sophisticated tools than the degree to analyze the neighborhood of an edge. 

\begin{wrapfigure}{r}{0.57\textwidth}\vspace{-1.5mm}
\begin{center}
	\begin{subfigure}[t]{0.17\textwidth}
	\includegraphics[width=\linewidth]{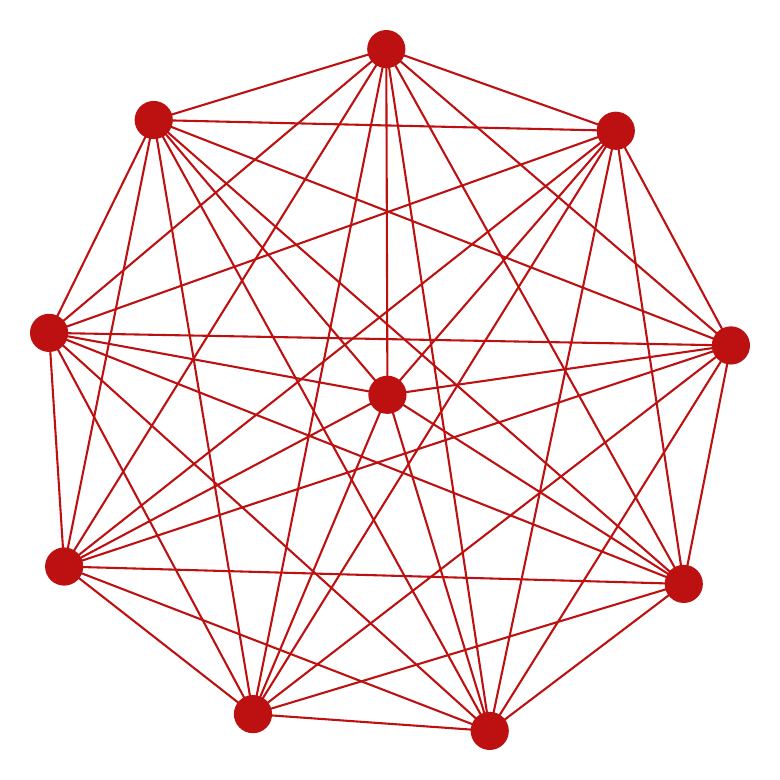}
	\caption{Clique ($>0$)}
     \end{subfigure}
     \hspace{2mm}
	\begin{subfigure}[t]{0.17\textwidth}
	\includegraphics[width=\linewidth]{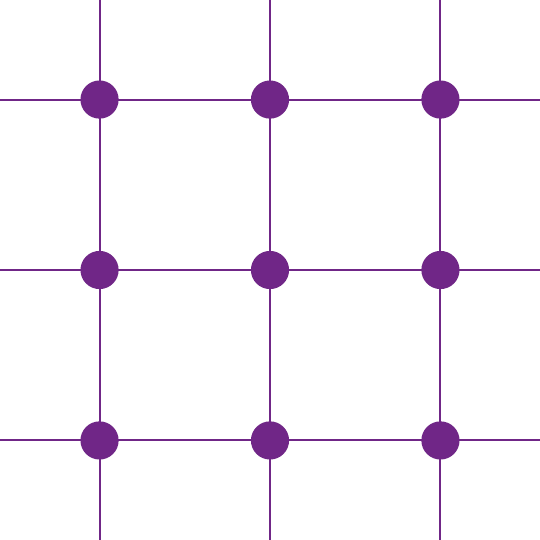}
	\caption{Grid ($=0$)}
     \end{subfigure}
     \hspace{2mm}
	\begin{subfigure}[t]{0.17\textwidth}
	\includegraphics[width=\linewidth]{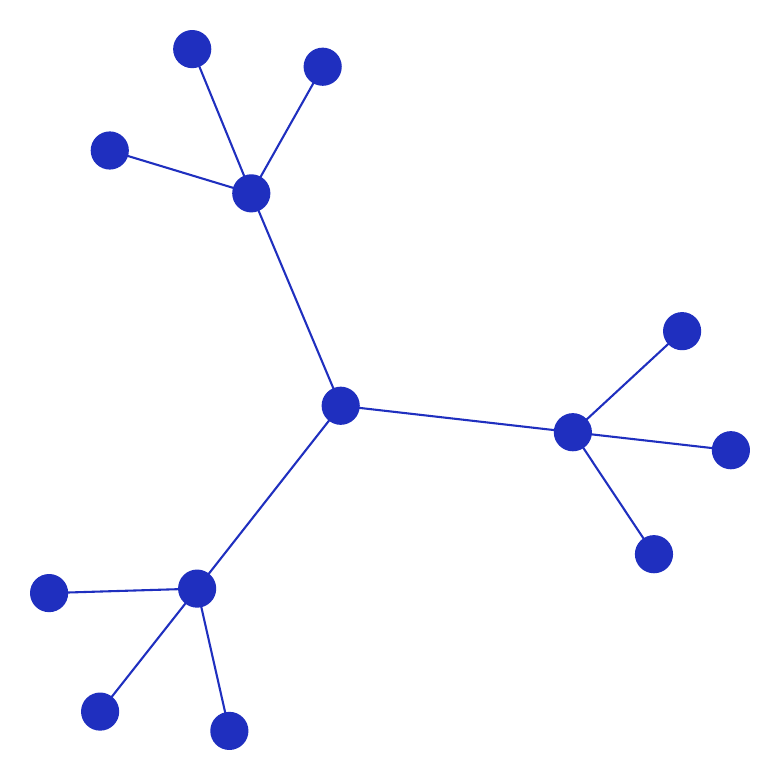}
	\caption{Tree ($<0$)}
     \end{subfigure}
\caption{Different regimes of curvatures on graphs analogous to spherical (a), planar (b), and hyperbolic (c) geometries in the continuous setting.}\vspace{-10mm}
\label{fig:regimes_curvatures}
\end{center}
\vskip 0.2in
\end{wrapfigure}


\paragraph{Curvatures on graphs.} 
The main examples of edge-based curvature are the {\em Forman curvature} $F(i,j)$ 
\citep{forman2003discrete} 
and the {\em Ollivier curvature} $\kappa(i,j)$ 
in \citet{ollivier2007ricci,ollivier2009ricci} (see Appendix). While $F(i,j)$ is given in terms of combinatorial quantities \citep{Sreejith_2016}, results are scarce 
and the definition is biased towards negative curvature. 
The theory on $\kappa(i,j)$ 
instead is richer \citep{lin2011ricci,munch2019non} but its formulation makes it hard to control local quantities. 

\paragraph{Balanced Forman curvature.}
We propose a new curvature to address the shortcomings of the existing candidates. 
%
We use the following definitions to describe the neighborhood of an edge $i\sim j$ and we refer to the Appendix for a more complete discussion:
\begin{itemize}
\setlength\itemsep{0.5em}
\item[(i)] $\sharp_{\Delta}(i,j) := S_{1}(i)\cap S_{1}(j)$ are the triangles based at $i\sim j$. 
\item[(ii)] $\sharp_{\square}^{i}(i,j):= \left\{ k\in S_{1}(i)\setminus S_{1}(j), k \neq j: \,\, \exists w\in \left(S_{1}(k)\cap S_{1}(j)\right) \setminus S_{1}(i)\right\}$ are the neighbors of $i$ forming a 4-cycle based at the edge $i\sim j$ \emph{without} diagonals inside.
\end{itemize}

\begin{wrapfigure}{r}{0.18\textwidth}\vspace{-3mm}
\vspace{-2mm}
\begin{center}
	\includegraphics[width=0.17\textwidth]{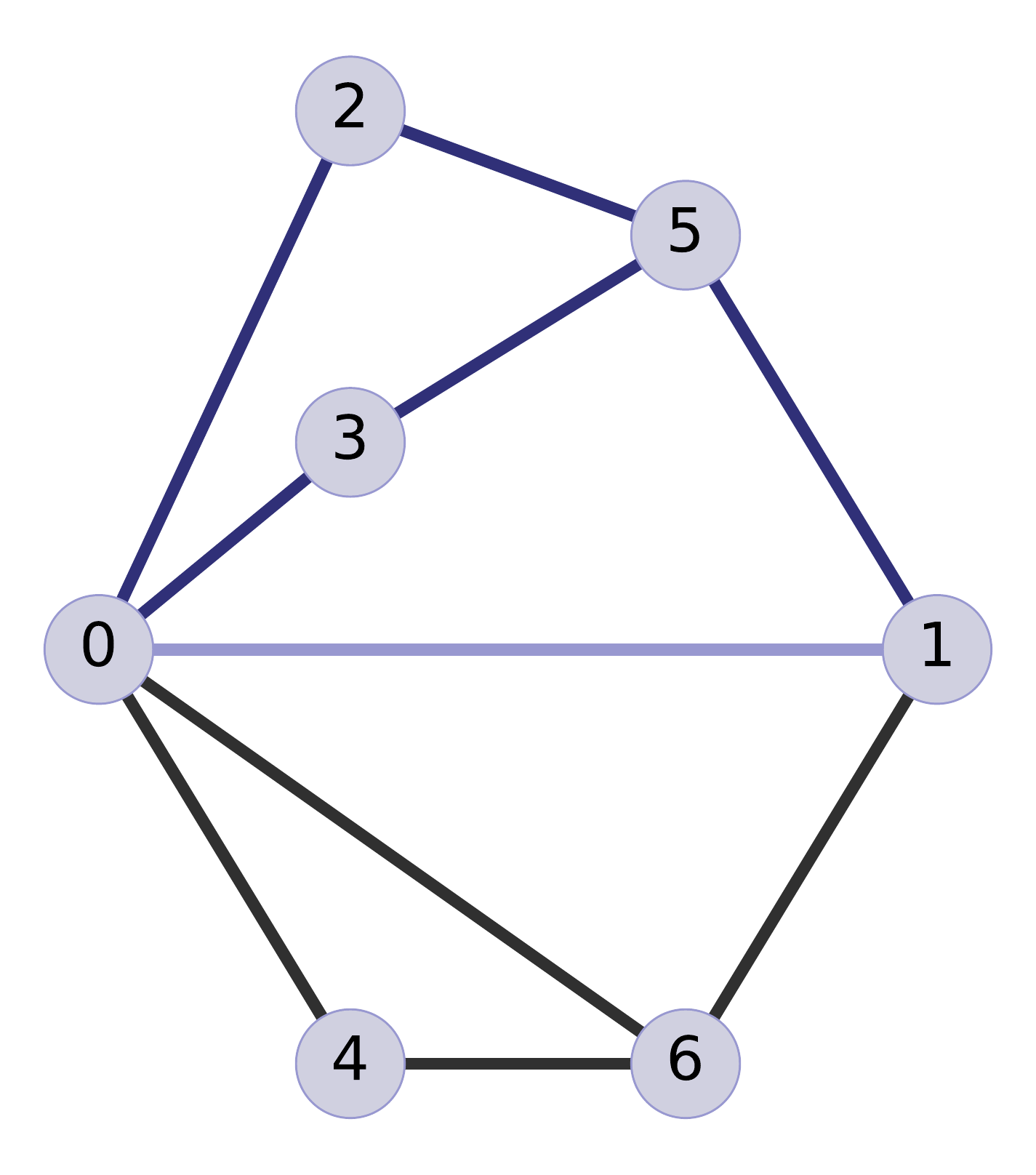}\vspace{-3mm}
\caption{\label{fig:lambda_graph} 4-cycle contribution.}\vspace{-10mm}
\end{center}
\end{wrapfigure}

\vspace{-2mm}\hspace*{5.65mm}(iii)\hspace{1.7mm}$\lmax(i,j)$ is the maximal number of $4-$cycles based at $i\sim j$ traversing \linebreak \hspace*{1.26cm}a common node (see Definition \ref{def:lambda_max}).

In line with the discussion about geodesic dispersion, one expects $\tr$ to be related to positive curvature (complete graph), $\sqi$ to zero curvature (grid), and the remaining \emph{outgoing} edges to negative curvature (tree). Our new curvature formulation reflects such an intuition and recovers the expected results in the classical cases.
In the example in Figure~\ref{fig:lambda_graph} we have $\sharp_{\square}^{0}(0,1) = \{2,3\}$ while $\sharp_{\square}^{1}(0,1) = \{5\}$, both without $4$,$6$ because of the triangle 1-6-0. The degeneracy factor $\lmax(0,1) = 2$, as there exist two 4-cycles passing through node $5$.

\vspace{2mm}
\begin{definition}[{\bf Balanced Forman curvature}]
For any edge $i\sim j$ in a simple, unweighted graph $G$, we let $\emph{Ric}(i,j)$ be zero if $\min\{d_{i},d_{j}\} = 1$ and otherwise\vspace{-1mm}
\begin{equation}\label{eq:curvaturedefn}
\emph{Ric}(i,j):= \frac{2}{d_{i}} + \frac{2}{d_{j}}-2 + 2\frac{\lvert \tr(i,j) \rvert}{\max\{d_{i},d_{j}\}} + \frac{\lvert \tr(i,j)\rvert}{\min\{d_{i},d_{j}\}} + \frac{(\lmax)^{-1}}{\max\{d_{i},d_{j}\}}(\lvert\sqi\rvert + \lvert\sqj\rvert),\vspace{-1mm}
\end{equation}
\noindent where the last term is set to be zero if $\lvert \sqi \rvert$ (and hence $\lvert\sqj\rvert$) is zero. In particular $\emph{Ric}(i,j) > -2$.
\end{definition}
\vspace{1mm}



The curvature is negative when $i\sim j$ behaves as a \emph{bridge} between $S_{1}(i)$ and $S_{1}(j)$, while it is positive when $S_{1}(i)$ and $S_{1}(j)$ stay connected after removing $i\sim j$. We refer to $\text{Ric}$ as \emph{Balanced Forman curvature}. 
We can relate the Balanced Forman curvature to the Jacobian of hidden features, while also extending many results valid for the Ollivier curvature $\kappa(i,j)$ thanks to our next theorem. \vspace{2mm}

\begin{restatable}{theorem}{comparisoncurvature}\label{thm:comparisoncurvature}
Given an unweighted graph $G$, for any edge $i\sim j$ we have
$
\kappa(i,j)\geq \emph{Ric}(i,j).
$
\end{restatable}
\vspace{0.75mm}

\begin{wraptable}{r}{0.3\textwidth}
\small
\bgroup
\def\arraystretch{1.2}
\vspace{-4mm}
\begin{tabular}{lll}
\toprule
\multicolumn{2}{l}{Graph $G$}             & $\Ric_G$                           \\ \toprule
\multirow{3}{*}{Cycles} & $C_3$           & $\frac{3}{2}$                                \\
                        & $C_4$           & 1                                  \\
                        & $C_{n\geq5}$ & 0                                  \\ \hline
\multicolumn{2}{l}{Complete $K_n$}        & $\frac{n}{n-1}$                    \\ \hline
\multicolumn{2}{l}{Grid $G_n$}            & 0                                  \\ \hline
\multicolumn{2}{l}{Tree $T_r$}            & $\frac{4}{r+1} - 2$ \\ \bottomrule
\end{tabular}
\vspace{-1mm}
\caption{Examples of the\\Balanced Forman curvature.}
\vspace{-6.5mm}
\egroup
\end{wraptable}

Theorem \ref{thm:comparisoncurvature} generalizes \citet[Theorem 3]{jost2014ollivier} (see Appendix).
We also note that the computational complexity for $\kappa$ scales as $O(\lvert E \rvert d_{\text{max}}^{3})$, while for our  $\text{Ric}$ we have $O(\lvert E \rvert d_{\text{max}}^{2})$, with $d_{\text{max}}$ the maximal degree.
From Theorem \ref{thm:comparisoncurvature} and \citet{paeng2012volume}, we find:


\begin{restatable}{corollary}{polynomialgrowth}\label{cor:polygrowth}
If $\emph{Ric}(i,j)\geq k > 0$ for any edge $i\sim j$, then $\mathrm{diam}(G) \leq \frac{2}{k}$. 

\end{restatable}
Therefore, controlling the curvature everywhere grants an upper bound on how `long' the long-range dependencies can be. Moreover, we can also adapt the volume growth results in \citet{paeng2012volume} showing that positive curvature everywhere prevents a too fast expansion of the $r$-hop for $r$ sufficiently large.

\paragraph{Curvature and over-squashing.} Thanks to our new combinatorial curvature and the sensitivity analysis in Lemma \ref{lemma:boundgeneralMPNN}, 
we are able to relate local curvature properties to the Jacobian of the node representations. 
This leads to one of the main results of this paper: {\bf edges with {\em high} negative curvature are those causing the graph bottleneck and thus leading to the over-squashing phenomenon:} 

\vspace{2mm}
\begin{restatable}{theorem}{jacobiancurvature}\label{thm:jacobiancurvaturebound}
Consider a MPNN as in \eqref{eq:MPNN}. Let $i\sim j$ with $d_{i}\leq d_{j}$ and assume that: 
\begin{itemize}
\item[(i)] $\lvert \nabla \phi_{\ell} \rvert \leq \alpha$ and $\lvert \nabla \psi_{\ell}\rvert \leq \beta$ for each $0 \leq \ell\leq L-1$, with $L\geq 2$ the depth of the MPNN.
\item[(ii)] There exists $\delta$ s.t. $0 < \delta < (\max\{d_{i},d_{j}\})^{-\frac{1}{2}}$, $\delta < \lmax^{-1}$, and $\emph{Ric}(i,j) \leq -2 + \delta$.
\end{itemize}
Then there exists $Q_{j}\subset S_{2}(i)$ satisfying $\lvert Q_{j}\rvert > \delta^{-1}$ and for $0 \leq \ell_{0}\leq L-2$ we have\vspace{-1mm}
\begin{equation}\label{eq:upperboundcurvature}
    \frac{1}{\lvert Q_{j}\rvert}\sum_{k\in Q_{j}}\left\vert \frac{\partial h_{k}^{(\ell_{0} + 2)}}{\partial h_{i}^{(\ell_{0})}}\right\vert < (\alpha\beta)^{2}\delta^{\frac{1}{4}}.\vspace{-1mm}
\end{equation}
\end{restatable}

Condition (i) is always satisfied and allows us to control the message passing functions. The requirement (ii) instead means that the curvature of $(i,j)$ is negative enough when compared to the degrees of $i$ and $j$ (recall that $\text{Ric}(i,j)>−2$). The further condition on $\lmax$ is to avoid pathological cases where we have a large number of degenerate 4-cycles passing through the same three nodes. 

To understand the conclusions in Theorem \ref{thm:jacobiancurvaturebound}, let us fix $\ell_{0} = 0$. The \eqref{eq:upperboundcurvature} shows that negatively curved edges are the ones causing bottlenecks, interpreted as how the graph topology prevents a representation $h_{k}^{(2)}$ to be affected by \emph{non}-adjacent features $h_{i}^{(0)} = x_{i}$. Theorem \ref{thm:jacobiancurvaturebound} implies that if we have a negatively curved edge as in (ii), then there exist a large
number of nodes $k$ such that GNNs - on average - struggle to propagate messages from $i$ to $k$ in two layers despite these nodes $k$ being at distance 2 from $i$. In this case the over-squashing occurs as measured by the Jacobian in \eqref{eq:upperboundcurvature} and hence the propagation of information suffers. If the task at hand has long-range dependencies, then the over-squashing caused by the negatively curved edges may compromise the performance. 



\paragraph{Bottleneck via Cheeger constant.}
We now relate the previous discussion about bottlenecks and curvature to spectral properties of the graph. In particular, since the spectral gap of a graph can be interpreted as a topological obstruction to the graph being partitioned into two communities, we argue below that this quantity is related to the graph bottleneck and should hence be controllable by the curvature.
We start with an intuitive explanation: suppose we are given a graph $G$ with two communities separated by few edges. In this case, we see that the graph can be easily disconnected. This property is encoded in the classical notion  of the {\em Cheeger constant} \citep{chung1997spectral}
\begin{equation}
    h_{G} := \min_{S\subset V}h_{S}, \,\,\,\,\,\,\,  h_{S}:= \frac{\lvert \partial S \rvert}{\min \{ \text{vol}(S),\text{vol}(V\setminus S)\}}
\end{equation}
\noindent where $\partial S = \{ (i,j): i\in S, \,\, j\in V\setminus S\}$ and $\text{vol}(S) = \sum_{i\in S}d_{i}$. The main result about the Cheeger constant is the \emph{Cheeger inequality} \citep{cheeger2015lower,chung1997spectral}:\vspace{-1mm}
\begin{equation}\label{eq:cheegerinequality}
2h_{G}\geq \lambda_{1}\geq \frac{h_{G}^{2}}{2}\vspace{-1mm}
\end{equation}
\noindent where $\lambda_{1}$ is the first non-zero eigenvalue of the normalized graph Laplacian, often referred to as the {\em spectral gap}. A graph with two tightly connected communities ($S$ and $V\setminus S:= \bar{S}$) and few inter-community edges has a small Cheeger constant $h_{G}$. For nodes in different communities to interact with each other, all messages need to go through the same few \emph{bridges} hence leading to the over-squashing of information (a similar intuition was explored in \citet{alon2020bottleneck}). Therefore, $h_{G}$ can be interpreted as a rough measure of graph `bottleneckedness', in the sense that the smaller its value, the more likely the over-squashing is to occur across inter-community edges. Since Theorem \ref{thm:jacobiancurvaturebound} implies that negatively curved edges induce the bottleneck, we expect a relationship between $h_{G}$ and the curvature of the graph. The next proposition follows from Theorem~\ref{thm:comparisoncurvature} and \citet{lin2011ricci}: \vspace{0.5mm}
\begin{restatable}{proposition}{curvaturecheeger}
If $\emph{Ric}(i,j)\geq k > 0$ for all $i\sim j$, then $\lambda_{1}/2 \geq h_{G}\geq \frac{k}{2}$.\vspace{-1mm}
\label{thrm:cheeger}
\end{restatable}
\noindent Therefore, a positive lower bound on the curvature gives us a control on $h_{G}$ and hence on the spectral gap of the graph. In the \hyperref[sec:rewiring]{next section}, we show that diffusion-based graph-rewiring methods might fail to significantly alter $h_{G}$ and hence correct the graph bottleneck potentially induced by inter-community edges. This will lead us to propose an alternative  curvature-based graph rewiring. 






\section{Curvature-based rewiring methods}
\label{sec:rewiring}
The traditional paradigm of message passing graph neural networks assumes that messages are propagated on the input graph \citep{gilmer2017neural}. More recently, there is a trend to decouple the input graph from the graph used for information propagation. This can take the form of graph subsampling or resampling to deal with scalability \citep{Hamilton2017} or topological noise \citep{Zhang19}, using larger motif-based \citep{MOTIFNET} or multi-hop filters \citep{rossi2020sign}, or changing the graph either as a preprocessing step \citep{klicpera2019diffusion,alon2020bottleneck} or adaptively for the downstream task \citep{wang2019dynamic,kazi2020differentiable}. 
Such methods are often generically referred to as {\em graph rewiring}.

In the context of this paper, we assume that graph rewiring  attempts to produce a new graph $G' = (V,E')$ with a different edge structure that reduces the bottleneck and hence potentially alleviates the over-squashing of information. 
%
%
We propose a method that leverages the graph curvature to guide the rewiring steps in a surgical way by modifying the negatively-curved edges, so to decrease the bottleneck without significantly compromising the statistical properties of the input graph. We also rigorously show 
that a random-walk based rewiring method might generally fail to obtain an edge set $E'$ with a significant improvement in its bottleneckedness as measured by the Cheeger constant.



\paragraph{Curvature-based graph rewiring.} 
Since according to Theorem \ref{thm:jacobiancurvaturebound} negatively curved edges induce a bottleneck and are hence responsible for over-squashing, a curvature-based rewiring method should attempt to alleviate a graph's strongly-negatively curved edges. To this end we implement a simple rewiring method called Stochastic Discrete Ricci Flow (SDRF), described in Algorithm \ref{alg:3_sdrf}.

\SetKwFor{BlankLoop}{}{}{}
\SetKwRepeat{Repeat}{Repeat}{Until}{}

\LinesNotNumbered
\begin{algorithm}
\caption{Stochastic Discrete Ricci Flow (SDRF)}
\label{alg:3_sdrf}
\vspace{0.05cm}
{\bfseries Input:} graph $G$, temperature $\tau>0$, max number of iterations, optional Ric upper-bound $C^+$\\
\Repeat{\emph{convergence, or max iterations reached}}{
  \BlankLoop{\emph{1) For edge $i \sim j$ with minimal Ricci curvature $\Ric(i,j)$:}}{
    \vspace{0.1cm} 
    Calculate vector $\boldsymbol{x}$ where $x_{kl} = \Ric_{kl}(i,j) - \Ric(i,j)$, the improvement to $\Ric(i,j)$ from adding edge $k \sim l$ where $k \in B_1(i)$, $l \in B_1(j)$;
    \vspace{0.1cm} 
    Sample index $k,l$ with probability $\text{softmax}(\tau\boldsymbol{x})_{kl}$ and add edge $k \sim l$ to $G$. 
    }
  2) Remove edge $i \sim j$ with maximal Ricci curvature $\Ric(i,j)$ if $\Ric(i,j) > C^+$.
  \vspace{0.05cm}
  }
\end{algorithm}

At each iteration this preprocessing step adds an edge to `support' the graph's most negatively curved edge, and then removes the most positively curved edge. The requirement on the added edge $k \sim l$ that $k \in B_1(i)$ and $l \in B_1(j)$ ensures that we're adding either an extra 3- or 4-cycle around the negative edge $i \sim j$ so that this is a local modification. The graph edit distance between the original and preprocessed graph is bounded above by 2 $\times$ the max number of iterations. The temperature $\tau$ determines how stochastic the edge addition is, with $\tau=\infty$ being fully deterministic (the best edge is always added). At each step we remove the edge with most positive curvature to balance the distributions of curvature and node degrees.  
We use Balanced Forman curvature as in \eqref{eq:curvaturedefn} for $\Ric(i,j)$. $C^+$ can be chosen to stop the method skewing the curvature distribution negative, including $C^+ = \infty$ to not remove any edges. The method is inspired by the continuous (backwards) Ricci flow with the aim of homogenizing edge curvatures. This is different from more direct extensions of Ricci flow on graphs where it becomes increasingly expensive to propagate messages across negatively curved edges (as in other applications such as \cite{ni2019community}). An example alongside its continuous analogue can be seen in Figure \ref{fig:intro_surface_graph}.

\paragraph*{Can random-walk based rewiring address bottlenecks?}
A good way of understanding the effectiveness of SDRF in reducing the graph bottleneck is through comparison with random-walk based rewiring strategies. Recall that, as argued in Section \ref{sec:negcurv}, the Cheeger constant $h_{G}$ of a graph constitutes a rough measure of its bottleneckedness as induced by the inter-community edges (a small $h_G$ is indicative of a bottleneck). Suppose we are given a graph $G$ with a small $h_{G}$ and wish to rewire it into a graph $G'$ with a significantly improved Cheeger constant in order to reduce the inter-community bottleneck. A random-walk based rewiring method such as DIGL \citep{klicpera2019diffusion} acts by smoothing out the graph adjacency and hence tends to promote connections among nodes at short \emph{diffusion distance} \citep{coifman2006diffusion}. Accordingly, such a rewiring method might fail to correct structural features like the bottleneck,  which is instead more prominent for nodes that are at long diffusion distance.\footnote{We refer to the right hand side of \eqref{eq:boundJacobian} where the power of the normalized augmented adjacency is measuring the number of walks of distance $r$ from $i$ to $s$.}
To emphasize this point, we consider a classic example: given $\alpha \in (0,1)$, the Personalized Page Rank (PPR) matrix is defined by \citep{brin1998anatomy} as\vspace{-1mm}
\[
R_{\alpha}:= \sum_{k = 0}^{\infty}\theta_{k}^{PPR}(D^{-1}A)^{k} = \alpha\sum_{k=0}^{\infty}\left((1-\alpha)(D^{-1}A)\right)^{k}.\vspace{-1mm}
\]
\noindent Assume that we rewire the graph using $R_{\alpha}$ as in \cite{klicpera2019diffusion} with the PPR kernel, meaning that we replace the given adjacency $A$ with $ R_{\alpha}$. 
Since $R_{\alpha}$ is stochastic, the new Cheeger constant of the rewired graph can be computed as\vspace{-1mm}
\[
h_{S,\alpha} = \frac{\lvert \partial S \rvert_{\alpha}}{\text{vol}_{\alpha}(S)} \equiv \frac{1}{\lvert S \rvert}\sum_{i\in S} \sum_{j\in \bar{S}}(R_{\alpha})_{ij}. \vspace{-1mm}
\]
\noindent By applying \cite[Lemma 5]{chung2007four}, we show that we cannot improve the Cheeger constant (and hence the bottleneck) arbitrarily well (in contrast to a curvature-based approach). We refer to Proposition 17 and Remark 18 in Appendix E for results that are more tailored to the actual strategy adopted in \cite{klicpera2019diffusion} where we also take into account the effect of the sparsification.

\begin{restatable}{theorem}{diglcheeger}\label{thm:DIGLbound}
Let $S\subset V$ with $\text{vol}(S)\leq \text{vol}(G)/2$. Then
$
h_{S,\alpha} \leq \left(\frac{1 - \alpha}{\alpha}\right)\frac{d_{\emph{avg}}(S)}{d_{\emph{min}}(S)}\,h_{S},
$
 where $d_{\emph{avg}}(S)$ and $d_{\emph{min}}(S)$ are the average and minimum degree on $S$, respectively.
\end{restatable}

The property that the new Cheeger constant is directly controlled by the old one stems from the fact that a random-walk approach like in \citet{klicpera2019diffusion} is meant to act more relevantly on intra-community edges rather than inter-community edges because it prioritizes short diffusion distance nodes. This is also why this method performs well on high-homophily datasets, as discussed below. In particular, for a fixed $\alpha\in (0,1)$, the bound in Theorem \ref{thm:DIGLbound} can be very small. As a specific example, consider two complete graphs $K_{n}$ joined by one bridge. Then $h_{G} = (n(n-1) + 1)^{-1}$, which means that the bound on the right hand side is $O(n^{-2})$. 

Theorem \ref{thm:DIGLbound} implies that a diffusion approach such as DIGL might fail to yield a new edge set $E'$ with a sufficiently improved bottleneck. By contrast, from Theorem \ref{thm:jacobiancurvaturebound} and Proposition \ref{thrm:cheeger}, we deduce that a curvature-based rewiring method such as SDRF properly addresses the edges that cause the bottleneck.

\paragraph{Graph structure preservation.}
Although a graph-rewiring approach aims at providing a new edge set $E'$ potentially more beneficial for the given learning task, it is still desirable to control 
how far $E'$ is from $E$. In this regard, we note that a curvature-based rewiring is surgical in nature and hence more likely to preserve the structure of the input graph better than a random-walk based approach. Consider, for example, that we are given $\rho > 0$ and wish to rewire the graph such that the new edge set $E'$ is within graph-edit distance $\rho$ from the original $E$. Theorem \ref{thm:jacobiancurvaturebound} tells us how to do the rewiring under such constraints in order to best address the over-squashing: the topological modifications need to be localized {\em around the most negatively-curved edges}. We can do this with SDRF, with the maximum number of iterations set to $\rho/2$.

Secondly, we also point out that $\text{Ric}(i,j) < -2 + \delta$ implies that $\min\{d_{i},d_{j}\} > 2/\delta$. Therefore, if we mostly modify the edge set at those nodes $i,j$ joined by an edge with large negative curvature, then we are perturbing nodes with high degree where such a change is relatively insignificant, and thus overall statistical properties of the rewired graph such as degree distribution are likely to be better preserved. 
Moreover, graph convolutional networks tend to be more stable to perturbations of high degree nodes \citep{zugner2020adversarial,kenlay2021interpretable}, making curvature-based rewiring more suitable for the downstream learning tasks with popular GNN architectures.

\paragraph{Homophily and bottleneck.} As a final remark, note that the graph rewiring techniques considered in this paper (both DIGL and SDRF) are based purely on the topological structure of the graph and completely agnostic to the node features 
and to whether the dataset is {\em homophilic} (adjacent nodes have 
same labels) or {\em heterophilic}.
%
%
Nonetheless, the 
different nature of these rewiring methods allows us to draw a few broad conclusions about their suitability in each of these settings. A random-walk approach such as DIGL tends to improve the connectivity among nodes that are at short diffusion distance; since for a high-homophily dataset these nodes 
often share the same label, a rewiring method like DIGL is likely to act as {\em graph denoising} and yield improved performance. 
On the other hand, for datasets with low homophily, nodes at short diffusion distance are more likely to 
belong to different label classes, meaning that a diffusion-based rewiring might inject noise and hence compromise performance as also noted in \citet{klicpera2019diffusion}. Conversely, on a low-homophily dataset, a curvature-based approach as SDRF modifies the edge set mainly around the most negatively curved edges, meaning that it decreases the bottleneck without significantly increasing the connectivity among nodes 
with different labels. In fact, long-range dependencies are often more relevant in low-homophily settings, where nodes sharing the same labels are in general not 
neighbors. 
This observation is largely confirmed by experimental results reported in the \hyperref[sec:experiments]{next section}.

\section{Experimental Results}
\label{sec:experiments}

\paragraph{Experiment setup.} To demonstrate the theoretical results in this paper we ran a suite of semi-supervised node classification tasks comparing our curvature-based rewiring method \hyperref[alg:3_sdrf]{SDRF} to DIGL from \cite{klicpera2019diffusion} (GDC with the PPR kernel) and the +FA method from \cite{alon2020bottleneck}, where the last layer of the GNN is made fully connected. We evaluate the methods on nine datasets: Cornell, Texas and Wisconsin from the WebKB dataset\footnote{http://www.cs.cmu.edu/afs/cs.cmu.edu/project/theo-11/www/wwkb/}; Chameleon and Squirrel \citep{rozemberczki2021multi} along with Actor \citep{tangactor2009}; and Cora \citep{mccallum2000automating}, Citeseer \citep{sen2008collective} and Pubmed \citep{namata2012query}. Statistics for these datasets can be found in Appendix \ref{app:datasets}. Our base model is a GCN \citep{kipf2016semi}. Following \cite{shchur2018pitfalls} and \cite{klicpera2019diffusion} we optimized hyperparameters for all dataset-preprocessing combinations separately by random search over 100 data splits. Results are reported as average accuracies on a test set used once with 95\% confidence intervals calculated by bootstrapping. We compared the performance on graphs with no preprocessing, making the graph undirected, +FA, DIGL, SDRF, and the given combinations. For DIGL + Undirected we symmetrized the diffusion matrix as in \citet{klicpera2019diffusion}, and for SDRF + Undirected we made the graph undirected before applying SDRF. For more details on the experiments and datasets see Appendix \ref{app:experiments}, and for the hyperparameters used for each model and preprocessing see Appendix \ref{app:hyperparameters}.

    \paragraph{Node classification results.} Table \ref{tab:results} shows the results of the experiments. As well as reporting results we give a measure of homophily $\mathcal{H}(G)$ proposed by \cite{pei2020geom} (restated in Appendix \ref{app:experiments}, \eqref{eq:homophily_index}), by which we can see our experiment set is diverse with respect to homophily. We see that SDRF improves upon the baseline in all cases, and that the largest improvements are seen on the low-homophily datasets. We also see that SDRF matches or outperforms DIGL and +FA on most datasets, supporting our argument that curvature-based rewiring is a viable candidate for improving GNN performance.


\begin{table}
\centering
\tiny
\bgroup
\vspace{-2mm}
\def\arraystretch{1.2}
\setlength{\tabcolsep}{1.2pt}
\resizebox{\textwidth}{!}{%
\begin{tabular}{lccccccccc} 
\toprule
                  & Cornell                       & Texas                         & Wisconsin                     & Chameleon                     & Squirrel                      & Actor                         & Cora                          & Citeseer                                                                & Pubmed                         \\
$\mathcal{H}(G)$  & 0.11                          & 0.06                          & 0.16                          & 0.25                          & 0.22                          & 0.24                          & 0.83                          & 0.71                                                                    & 0.79                           \\ 
\toprule
None              & $52.69\pm0.21$              & $61.19 \pm 0.49$              & $54.60 \pm 0.86$              & $41.33 \pm 0.18$              & $30.32 \pm 0.99$              & $23.84 \pm 0.43$              & $81.89 \pm 0.79$              & $72.31 \pm 0.17$                                                        & $78.16 \pm 0.23$               \\
Undirected        & $53.20 \pm 0.53$              & $63.38 \pm 0.87$              & $51.37 \pm 1.15$              & $42.02 \pm 0.30$              & $35.53 \pm 0.78$              & $21.45 \pm 0.47$              & -                             & -                                                                       & -                              \\
+FA               & $\boldsymbol{58.29 \pm 0.49}$ & $\boldsymbol{64.82 \pm 0.29}$ & $55.48 \pm 0.62$              & $42.67 \pm 0.17$              & $36.86 \pm 0.44$              & $24.14 \pm 0.43$              & $81.65 \pm 0.18$              & $70.47 \pm 0.18$                                                        & $\boldsymbol{79.48 \pm 0.12}$  \\
DIGL (PPR)        & $58.26 \pm 0.50$              & $62.03 \pm 0.43$              & $49.53 \pm 0.27$              & $42.02 \pm 0.13$              & $33.22 \pm 0.14$              & $24.77 \pm 0.32$              & $\boldsymbol{83.21 \pm 0.27}$ &   $\boldsymbol{73.29 \pm 0.17}$ & $78.84 \pm 0.08$               \\
DIGL + Undirected & $\boldsymbol{59.54 \pm 0.64}$ & $63.54 \pm 0.38$              & $52.23 \pm 0.54$              & $42.68 \pm 0.12$              & $32.48 \pm 0.23$              & $25.45 \pm 0.30$              & -                             & -                                                                       & -                              \\
SDRF              & $54.60 \pm 0.39$              & $64.46 \pm 0.38$              & $\boldsymbol{55.51 \pm 0.27}$ & $\boldsymbol{42.73 \pm 0.15}$ & $\boldsymbol{37.05 \pm 0.17}$ & $\boldsymbol{28.42 \pm 0.75}$ & $\boldsymbol{82.76 \pm 0.23}$ & $\boldsymbol{72.58 \pm 0.20}$                                           & $\boldsymbol{79.10 \pm 0.11}$  \\
SDRF + Undirected & $57.54 \pm 0.34$              & $\boldsymbol{70.35 \pm 0.60}$ & $\boldsymbol{61.55 \pm 0.86}$ & $\boldsymbol{44.46 \pm 0.17}$ & $\boldsymbol{37.67 \pm 0.23}$ & $\boldsymbol{28.35 \pm 0.06}$ & -                             & -                                                                       & -                              \\
\bottomrule
\end{tabular}
}
\caption{Experimental results on common node classification benchmarks. Top two in bold.}
\label{tab:results}
\vspace{-3mm}
\egroup
\end{table}


%
\paragraph{Graph topology change.} Furthermore, SDRF preserves the graph topology to a far greater extent than DIGL due to its surgical nature. Table \ref{tab:avg_degrees} shows the number of edges added / removed by the two preprocessings on each dataset as a percentage of the original number of edges. We see that for optimal performance DIGL makes the graph much denser, which  significantly affects the node degrees and may negatively impact the time and space complexity of the downstream GNN, which typically are $O(E)$. In comparison, SDRF adds and removes a similar number of edges and approximately preserves the degree distribution. The effect on the full degree distribution for three 
\begin{wraptable}{r}{0.5\textwidth}
\footnotesize
\bgroup
\def\arraystretch{1.3}
\begin{tabular}{lcc} 
\toprule
          & DIGL                 & SDRF                \\ 
\toprule
Cornell   & $351.1\%$ / $0.0\%$  & $7.8\%$ / $33.3\%$  \\
Texas     & $483.3\%$ / $0.0\%$  & $2.4\%$ / $10.4\%$  \\
Wisconsin & $300.6\%$ / $0.0\%$  & $1.4\%$ / $7.5\%$   \\
Chameleon & $336.1\%$ / $11.8\%$ & $5.6\%$ / $5.6\%$   \\
Squirrel  & $73.2\%$ / $66.4\%$   & $4.2\%$ / $4.2\%$   \\
Actor     & $8331.3\%$ / $0.0\%$ & $1.9\%$ / $3.0\%$   \\
Cora      & $3038.0\%$ / $0.5\%$ & $1.0\%$ / $1.0\%$   \\
Citeseer  & $2568.3\%$ / $0.0\%$ & $1.1\%$ /~$1.1\%$   \\
Pubmed    & $2747.1\%$ / $0.1\%$ & $0.2\%$ / $0.2\%$   \\
\bottomrule
\end{tabular}
\vspace{-1mm}
\caption{\% edges added / removed by method.}
\label{tab:avg_degrees}
\vspace{-3mm}
\egroup
\end{wraptable} datasets is shown in Figure \ref{fig:degree_distns}. We see that the degree distribution following SDRF is close to, or in some cases indistinguishable from, the original distribution, whereas DIGL has a more noticeable effect. We also compute the Wasserstein distance between the degree distributions, denoted by $W_1$ in the figure captions, to numerically confirm this observation. For this analysis extended to all nine datasets see Appendix \ref{app:degrees}.

This difference is also visually evident from Figure \ref{fig:cornell_rewiring} in Appendix \ref{app:extra_fig}, where we again observe that SDRF largely preserves the topology of the  Cornell graph and that the curvature (encoded by the edge color) is homogenized across the graph. We also see that the entries of the trained network's Jacobian between a node's prediction and the features of its 2-hop neighbors (encoded as node colors) are increased over the graph, which we may attribute to both DIGL and SDRF's ability to alleviate the upper bound presented in Theorem \ref{thm:jacobiancurvaturebound} and thus reduce over-squashing.


\begin{figure}[t]
\vskip -0.2in
\captionsetup{singlelinecheck=off}
\begin{center}
	\begin{subfigure}[t]{0.325\textwidth}
	\includegraphics[width=\linewidth]{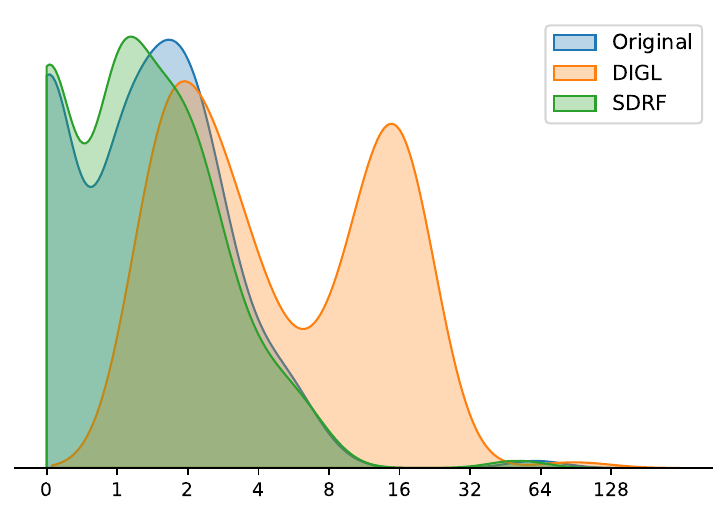}
	\caption{\centering Wisconsin:\small \vspace{-1mm}\begin{align*}W_1(\text{Original}, \text{DIGL})&=11.83\\ W_1(\text{Original}, \text{SDRF})&=0.28\end{align*}}
     \end{subfigure}
	\begin{subfigure}[t]{0.325\textwidth}
	\includegraphics[width=\linewidth]{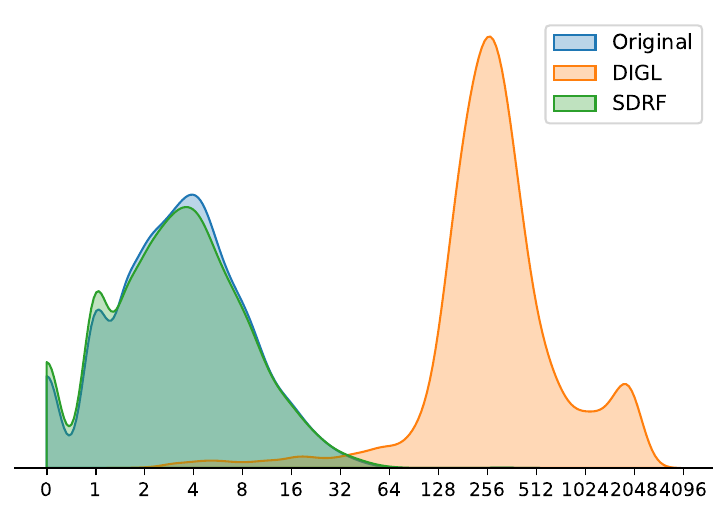}
	\caption{\centering Actor:\small \vspace{-1mm}\begin{align*}W_1(\text{Original}, \text{DIGL})&=831.88\\ W_1(\text{Original}, \text{SDRF})&=0.28\end{align*}}
     \end{subfigure}
	\begin{subfigure}[t]{0.325\textwidth}
	\includegraphics[width=\linewidth]{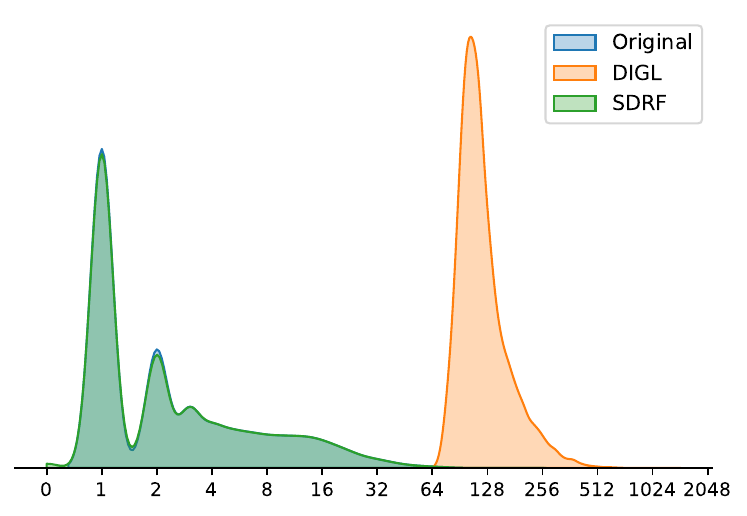}
	\caption{\centering Pubmed:\small \vspace{-1mm}\begin{align*}W_1(\text{Original}, \text{DIGL})&=247.01\\ W_1(\text{Original}, \text{SDRF})&=0.03\end{align*}}
     \end{subfigure}
\vspace{-6mm}
\caption{Comparing the degree distribution of the original graphs to the preprocessed version. The x-axis is node degree in $\text{log}_2$ scale, and the plots are a kernel density estimate of the degree distribution. In the captions we see the Wasserstein distance $W_1$ between the original and preprocessed graphs.}
\label{fig:degree_distns}
\end{center}
\vskip -0.25in
\end{figure}


\vspace{-1mm}
\section{Conclusion}
\vspace{-2mm}
In this paper, we studied the graph bottleneck and the over-squashing phenomena limiting the performance of message passing graph neural networks from a geometric perspective. 
We started with a Jacobian approach to determine how the over-squashing phenomenon is dictated by the graph topology as in \eqref{eq:boundJacobian}.
We then investigated further how the topology induces the bottleneck and hence causes over-squashing. We introduced a new 
notion of edge-based Ricci curvature called Balanced Forman curvature, relating it to the classical Ollivier curvature (Theorem~\ref{thm:comparisoncurvature}). We then proved in Theorem \ref{thm:jacobiancurvaturebound} that negatively-curved edges are responsible for over-squashing, calling for a possibility of curvature-based rewiring of the graph in order to improve its bottleneckedness. We show one such possibility (Algorithm~\ref{alg:3_sdrf}), inspired by the classical Ricci flow 
and comment on the advantages of surgical method such as this. 
We show both theoretically and experimentally that the proposed method can be advantageous compared to a diffusion-based rewiring approach, opening the door for curvature-based rewiring methods for improving GNN performance going forward.

%
%
%



\vspace{-2mm}
\paragraph{Limitations and future directions.} 
Our paper establishes a geometric perspective on the graph bottleneck and over-squashing, providing new tools to study and cope with these phenomena. 
One limitation of our work is that the theoretical results presented here do not currently extend to multi-graphs. In addition, the current methodology is agnostic to information beyond the graph topology, such as node features. In future works, we will develop a notion of the curvature and the corresponding rewiring method that can take into account such information. 
%
%



\vspace{-2mm}
\paragraph{Acknowledgements.}
This research was supported in part by the EPSRC CDT in Modern Statistics and Statistical Machine Learning (EP/S023151/1) and the ERC Consolidator Grant No. 724228 (LEMAN).



\clearpage

 \clearpage
 
\bibliography{iclr2022_conference}
\bibliographystyle{iclr2022_conference}

\clearpage

\appendix
\section*{Appendix} 
The Appendix is structured as follows:
\begin{itemize}
\setlength\itemsep{0.7em}
\item[(i)] In Appendix \ref{app:proofs_sec_2} we prove Lemma \ref{lemma:boundgeneralMPNN} and a side-result about the role of self-loops. We also summarize how to extend the analysis in Lemma \ref{lemma:boundgeneralMPNN} to message passing models with sum aggregations (not average), meaning those architectures where we do not normalize the adjacency matrix. 
\item[(ii)] In Appendix \ref{app:analysis_neighborhood} we introduce and describe different quantities we use to characterize the neighbourhood of a given edge $i\sim j$. These objects are all essential to studying the new notion of balanced Forman curvature. The focus is on how we can distinguish 4-cycles in a computationally tractable way without losing too much accuracy.
\item[(iii)] In Appendix \ref{app:existing_curvatures} we provide a brief literature review about existing curvature candidates. In particular, we report the definitions of (modified) Ollivier curvature and Forman curvature. 
\item[(iv)] In Appendix \ref{app:proofs_section_3} we prove the statements in Section \ref{sec:negcurv}, i.e. Theorem \ref{thm:comparisoncurvature}, Corollary \ref{cor:polygrowth}, Theorem \ref{thm:jacobiancurvaturebound} and Proposition \ref{thrm:cheeger}. We also comment on the role of the assumptions and compare the bound in Theorem \ref{thm:comparisoncurvature} with the existing literature. Finally, we relate the classical notion of betweenness centrality to the over-squashing effect and the negatively curved edges in a graph.
\item[(v)] In Appendix \ref{app:proofs_section_4} we prove the results in Section \ref{sec:rewiring}, namely Theorem \ref{thm:DIGLbound} and an analogous result.
\item[(vi)] In Appendix \ref{app:experiments} we describe more fully the experiments from Section \ref{sec:experiments}, including a full analysis on degree distribution (Appendix \ref{app:degrees}) and the hyperparameters used in our experiments (Appendix \ref{app:hyperparameters}).
\item[(vii)] In Appendix \ref{sec:hardware_specs} we comment on hardware specifications.
\end{itemize}
\section{Proofs of results in Section 2}
\label{app:proofs_sec_2}

\jacobianlemma*
\begin{proof}
Let $i\in V$ and $s\in S_{r+1}(i)$. We recall that to ease the notations we assume that node features and hidden representations are scalar. The proof in the more general higher-dimensional case follows without any modification. We compute
\begin{align*}
\frac{\partial h_{i}^{(r+1)}}{\partial x_{s}} &= \partial_{1}\phi_{r}(\ldots)\partial_{x_{s}}h_{i}^{(r)} \\ &+\partial_{2}\phi_{r}(\ldots)\sum_{j_{r} = 1}^{n}\hat{a}_{ij_{r}}\left(\partial_{1}\psi_{r}(h_{i}^{(r)},h_{j_{r}}^{(r)})\partial_{x_{s}}h_{i}^{(r)} + \partial_{2}\psi_{r}(h_{i}^{(r)},h_{j_{r}}^{(r)})\partial_{x_{s}}h_{j_{r}}^{(r)}  \right).
\end{align*}
\noindent We can iterate the computation above and see that the right hand side can be expanded as 
\[
\frac{\partial h_{i}^{(r+1)}}{\partial x_{s}} = \sum_{j_{r},\ldots,j_{0}}\sum_{k_{r}\in \{i,j_{r}\}}\cdots \sum_{ k_{1}\in \{i,j_{r},\ldots,j_{1}\}}\hat{a}_{ij_{r}
}\hat{a}_{k_{r} j_{r-1}}\ldots \hat{a}_{k_{1} j_{0}}Z_{ij_{r}k_{r} j_{r-1}\ldots k_{1}j_{0}}(X)\partial_{x_{s}}h_{j_{0}}^{(0)},
\]
\noindent for some functions $Z_{ij_{r}\ldots k_{1}j_{0}}$ of the input features obtained as products of $r+1$ partial derivatives of the maps $\phi_{\ell}$ and $r+1$ partial derivatives of the maps $\psi_{\ell}$. Since $H^{(0)} = X$, we have 
\[
\partial_{x_{s}}h_{j_{0}}^{(0)} = \delta_{j_{0}s}
\]
\noindent meaning that the previous sum becomes
\[
\frac{\partial h_{i}^{(r+1)}}{\partial x_{s}} = \sum_{j_{r},\ldots,j_{1}}\sum_{k_{r}\in \{i,j_{r}\}}\cdots \sum_{ k_{1}\in \{i,j_{r},\ldots,j_{1}\}}\hat{a}_{ij_{r}
}\hat{a}_{k_{r} j_{r-1}}\ldots \hat{a}_{k_{1} s}Z_{ij_{r}k_{r} j_{r-1}\ldots k_{1}s}(X).
\]
\noindent Since $d_{G}(i,s) = r+1$, the only non-vanishing terms in the sum above are the minimal walks from $i$ to $s$. In fact, if there existed a different choice of coefficients yielding a non-zero term then we would find a walk joining $i$ to $s$ of length lesser than $r+1$, which is in contradiction with the definition of geodesic distance. Since $Z_{i\ldots s}(X)$ is a product of $r+1$-partial derivatives of the aggregation and update maps and by assumption their gradients are bounded by $\alpha$ and $\beta$ respectively, we conclude that 
\[
\left\vert\frac{\partial h_{i}^{(r+1)}}{\partial x_{s}}\right\vert \leq (\alpha\beta)^{r+1}\sum_{j_{r},\ldots,j_{1}}\hat{a}_{ij_{r}
}\hat{a}_{j_{r}j_{r-1}}\ldots \hat{a}_{j_{1}s} = \alpha^{r+1}(\hat{A}^{r+1})_{is}
\]
\noindent which completes the proof of the Lemma. 
\end{proof}

As a byproduct of this analysis, we can also provide a rigorous motivation for the role of self-loops in GNNs (see Appendix for details): 
\begin{restatable}{corollary}{selfloop}\label{cor:selfloops}
If $h_{i}^{(\ell+1)} = \sum_{j \sim i}\psi_{\ell}(h_{j}^{(\ell)})
$, then $h_{i}^{(\ell+1)}$ only depends on nodes that can be reached via walks of length \textbf{exactly} $\ell + 1$. By adding self-loops, the GNN also takes into account nodes that can be reached via walks of length $r \leq \ell + 1$. 
\end{restatable}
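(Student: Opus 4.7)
The plan is to obtain the corollary directly from the combinatorial expansion carried out in the proof of Lemma \ref{lemma:boundgeneralMPNN}, specialized to the update rule $h_i^{(\ell+1)} = \sum_{j \sim i} \psi_\ell(h_j^{(\ell)})$. Concretely, I would measure ``dependence of $h_i^{(\ell+1)}$ on node $s$'' through the partial derivative $\partial h_i^{(\ell+1)}/\partial x_s$, exactly as in the lemma, and then argue that the corresponding sum contains only terms indexed by walks in $G$.

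First, by induction on $\ell$, I would unroll the recursion. For $\ell = 0$ one directly has $\partial h_i^{(1)}/\partial x_s = \sum_{j \sim i} \psi_0'(x_j)\, \delta_{js}$, which is nonzero only when $s \sim i$, i.e.\ when there is a walk of length exactly $1$ from $s$ to $i$. Assuming the claim at step $\ell$, I would apply the chain rule to
\[
\frac{\partial h_i^{(\ell+1)}}{\partial x_s} \;=\; \sum_{j \sim i} \psi_\ell'(h_j^{(\ell)}) \, \frac{\partial h_j^{(\ell)}}{\partial x_s}.
\]
Substituting the inductive form for $\partial h_j^{(\ell)}/\partial x_s$ yields a sum indexed by tuples $(j_\ell, j_{\ell-1}, \ldots, j_0)$ with $i \sim j_\ell \sim j_{\ell-1} \sim \cdots \sim j_0 = s$, i.e.\ precisely the walks of length exactly $\ell+1$ from $s$ to $i$. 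In particular, if $s$ is not the endpoint of any such walk, the derivative vanishes identically, proving the first assertion.

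For the second assertion, I would observe that adding a self-loop at every node amounts to extending the adjacency relation $\sim$ by the reflexive pairs $(i,i)$. Consequently, in the expanded sum the factor $k \sim k'$ now allows $k' = k$, meaning a tuple $(j_\ell, \ldots, j_0)$ may repeat a vertex in consecutive positions. Collapsing such consecutive repetitions in any admissible tuple produces a genuine (self-loop-free) walk from $s$ to $i$ of length $r$ for some $r \le \ell+1$, and conversely every walk of length $r \le \ell + 1$ can be padded with $\ell + 1 - r$ self-loop steps to appear in the expansion. Hence $\partial h_i^{(\ell+1)}/\partial x_s$ is non-trivial for every $s$ reachable from $i$ by a walk of length at most $\ell+1$.

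There is no serious obstacle here beyond bookkeeping: the corollary is essentially a qualitative reading of the same telescoping argument used for the lemma, with the simplification that the $\phi$-branch (which in the lemma carried a $\partial_{x_s} h_i^{(\ell)}$ term without consuming an edge) is absent in the pure-aggregation model and is exactly restored by self-loops. The only place to be careful is the collapsing/padding bijection between admissible tuples with self-loops and walks of length $\le \ell+1$, which I would state as a brief combinatorial remark rather than a formal lemma.
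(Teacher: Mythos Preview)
Your proposal is correct and matches the paper's own argument: both unroll the recursion via the chain rule to express $\partial h_i^{(\ell+1)}/\partial x_s$ as a sum over tuples $(j_\ell,\ldots,j_1)$ whose nonvanishing terms correspond exactly to walks of length $\ell+1$, and both observe that replacing the adjacency by its self-loop-augmented version turns this into walks of length at most $\ell+1$. The only cosmetic difference is that the paper writes the expansion compactly using adjacency-matrix coefficients $a_{ij_\ell}\cdots a_{j_1 s}$ rather than framing it as an explicit induction.
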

\begin{proof}
If $h_{i}^{(\ell + 1)} = \sum_{j\sim i}\psi_{\ell}(h_{j}^{(\ell)})$ for each $\ell \in [0,L-1],$ then we can argue as in the proof of Lemma \ref{lemma:boundgeneralMPNN} and find
\[
\frac{\partial h_{i}^{(\ell +1)}}{\partial x_{s}} = \sum_{j_{\ell},\ldots, j_{1}}a_{ij_{\ell}}\ldots a_{j_{1}s}\psi_{\ell}^{\prime}(h_{j_{\ell}}^{(\ell)})\cdots \psi_{0}^{\prime}(x_{s}).
\]
The combinatorial coefficient $a_{ij_{\ell}}\ldots a_{j_{1}s}$ is non-zero iff there exists a walk from $i$ to $s$ of length exactly $\ell + 1$, since we are not taking into account the contribution coming from self-loops. Conversely, if each term $a_{ij_{\ell}}$ was replaced by $\hat{a}_{ij_{\ell}}$ then we would find that $\hat{a}_{ij_{\ell}}\ldots \hat{a}_{j_{1}s}$ is non-zero iff there exists a walk from $i$ to $s$ of length \emph{at most} $r+1$, since the diagonal entries are now positive.  
\end{proof}
\begin{remark}
As a specific instance of Corollary \ref{cor:selfloops}, we note that if we do not include self-loops in the adjacency matrix, then the output of a 2-layer simplified graph neural network at node $i$ is independent of the features of neighbours $k$ that do not form a triangle with $i$. Once again here the dependence is precisely measured via the Jacobian of the hidden features with respect to the input features. 



\end{remark}

\begin{remark}
We note that the role of self-loops has also implicitly been noted in \cite{xu2018representation} where the analysis of the Jacobian of node representations on the graph augmented with self-loops has been related to lazy random-walks.

\end{remark}

\paragraph{GNNs with different aggregations} We note that similar conclusions extend to message passing architectures where the aggregations are sums and not averages meaning that we take the augmented adjacency without normalizing by the degree matrices.
Consistently with Lemma 1, we restrict to the setting where features and node representations at each layer are scalars to make the discussion simpler. In line with the \cite{xu2018representation} we consider a GNN-model of the form
\[
h_{i}^{(\ell + 1)} = \text{ReLU}\left(\sum_{j\in \tilde{\mathcal{N}}_{i}}h_{j}^{(\ell)}w_{\ell}\right).
\]
\noindent Note that the augmented neighbourhood $\tilde{\mathcal{N}}_{i}$ is defined as $\mathcal{N}_{i}\cup \{i\}$. Differently from the setting of Theorem 1 in \cite{xu2018representation}, the aggregation here is not an average but a simple sum. Let us now take nodes $i$ and $s$ such that $s\in S_{r+1}(i)$ as in the statement of Lemma \ref{lemma:boundgeneralMPNN}. In this case, instead of simply considering the quantity $\lvert \partial h_{i}^{(r+1)}/\partial x_{s}\rvert$, we normalize the Jacobian entries - obtaining what is referred to as \emph{influence score} in \cite{xu2018representation}:
\[
\mathcal{J}_{r+1}(i,s) := \frac{\left\vert \frac{\partial h_{i}^{(r+1)}}{\partial x_{s}} \right\vert}{\sum_{k}\left\vert \frac{\partial h_{i}^{(r+1)}}{\partial x_{k}} \right\vert}
\]
\noindent This of course represents now a relative importance of feature $x_{s}$ on the representation of node $i$ at layer $r+1$. If - similarly to Theorem 1 in \cite{xu2018representation} - we assume that all paths in the computational graph of the model are activated with the same probability, then we obtain that on average
\[
\mathcal{J}_{r+1}(i,s) = \frac{\tilde{A}^{r+1}_{is}}{\sum_{k}\tilde{A}^{r+1}_{ik}} \leq \frac{\tilde{A}^{r+1}_{is}}{\text{Vol}(B_{r+1}(i))},
\]
\noindent where $\tilde{A} = A + I$ and $\text{vol}(S) = \sum_{j\in S}d_{j}$. In particular, we again find that if we have a tree structure, then the right hand side decays exponentially as $2^{-(r+1)}$.

\section{Preliminary analysis of an edge-neighborhood}
\label{app:analysis_neighborhood}
Given an edge $i\sim j$, we introduce the sets below:
\begin{itemize}
\setlength\itemsep{0.5em}
\item[(i)] $\sharp_{\Delta}(i,j) := S_{1}(i)\cap S_{1}(j)$, the number of triangles based at the edge $i\sim j$.
\item[(ii)] $\sharp_{\square}^{i}(i,j):= \left\{ k\in S_{1}(i)\setminus S_{1}(j), k \neq j: \,\, \exists w\in \left(S_{1}(k)\cap S_{1}(j)\right) \setminus S_{1}(i)\right\}$, the number of nodes $k\in S_{1}(i)$ forming a 4-cycle based at $i\sim j$ \emph{without} diagonals inside.
\item[(iii)]$Q_{i}(j) := S_{1}(i)\setminus (\{j\} \cup \sharp_{\Delta}(i,j) \cup \sharp_{\square}^{i}(i,j))$, simply the complement of the neighbours of $i$ with respect to the sets introduced in (i) and (ii) once we also exclude $j$. 
\end{itemize}
\noindent In the following we simply write $\tr$, $\sqi$ and $Q_{i}$ when the edge $i\sim j$ is clear from the context. 

\paragraph*{4-cycle contributions.} In general the sets $\sqi$ and $\sqj$ may differ. This may occur when there exists a node $k$ belonging to $\sqi$ that admits multiple solutions $w$ as in the definition of $\sqi$. This feature needs to be taken into account when comparing Ollivier's Ricci curvature to the new notion we present below. We first introduce the following class to ease the notations. 
\begin{definition}
For any simple, undirected graph $G = (V,E)$, if $U \subset V$, then we set
\[
\mathcal{D}(U) := \left\{\varphi: U\rightarrow V, \,\, \lvert U \rvert = \lvert \varphi(U) \rvert, \,\,(z,\varphi(z))\in E, \,\, \forall z\in U\right\}.
\]
\end{definition}
We note that any $\varphi\in \mathcal{D}(U)$ is injective.
\\We may now define a quantity which measures the maximal number of 1-1 pairings that can be performed from $\sqi$ to $\sqj$. 
\begin{definition}
For any edge $i\sim j$ we let
\[
\sqo(i,j):= \max\left \{\lvert U \rvert: \,\,U \subset \sqi, \,\, \exists \varphi:U \rightarrow \sqj, \,\, \varphi\in \mathcal{D}(U)\right\}.
\]
\end{definition}
We often simply write $\sqo$.
While the quantity $\sqo$ plays a role in the derivation of the Ollivier curvature of $i\sim j$ it is not computationally-friendly, as to determine $\sqo$ we need to identify and distinguish all possible 4-cycles based at $i\sim j$ and then choose a maximal pairing map. Accordingly, we consider a looser term which is easier to compute:
\begin{definition}\label{def:lambda_max} For any pair of adjacent nodes $i\sim j$ we define
\[
\lmax(i,j) :=\max\left\{ \max_{k\in\sqi}\{(A_{k}\cdot (A_{j}-A_{i}\odot A_{j})) - 1\}, \max_{w\in\sqj}\{(A_{w}\cdot (A_{i}-A_{j}\odot A_{i})) - 1\}\right\},
\]
\noindent where $A_{s}$ denotes the $s$-th row of the adjacency matrix. We usually simply write $\lmax$. 
\end{definition}
\begin{remark} We note that given $k\in S_{1}(i)\setminus S_{1}(j)$ the term $(A_{k}\cdot (A_{j}-A_{i}\odot A_{j})) - 1$ yields the number of nodes $w$ forming a 4-cycle of the form $i\sim k \sim w \sim j\sim i$ with no diagonals inside. The value $\lmax$ measures the maximal degeneracy of edges forming 4-cycles, meaning that it is equal to 1 iff for each $k\in\sqi$ there exists a unique node $w\in \sqj$ such that $i\sim k\sim w \sim j\sim i$ is a 4-cycle.
\end{remark}
We now end the discussion about 4-cycle contributions by proving the following inequality, which allows us to avoid to compute directly the term $\sqo$ up to giving up some accuracy.
\begin{lemma}\label{lemma:4cycle}
For any edge $i\sim j$ we have
\[
\lvert\sqo\rvert \geq \frac{\max\{\lvert\sqi \rvert, \lvert \sqj \rvert\}}{\lmax}.
\]
\end{lemma}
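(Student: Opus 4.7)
The plan is to recast $|\sqo|$ as the matching number of an auxiliary bipartite graph and then apply the standard consequence of K\"onig's edge-coloring theorem: any bipartite graph $H$ admits a matching of size at least $|E(H)|/\Delta(H)$.

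First I would build a bipartite graph $H$ with vertex classes $\sqi$ and $\sqj$, joining $k\in\sqi$ to $w\in\sqj$ exactly when $k\sim w$ in $G$. Any such pair automatically corresponds to a $4$-cycle $i\sim k\sim w\sim j\sim i$ without diagonals, since $k\in\sqi$ forces $k\sim i$ and $k\not\sim j$, while $w\in\sqj$ forces $w\sim j$ and $w\not\sim i$. Conversely, if $k\in\sqi$ and $w\in S_{1}(k)\cap S_{1}(j)\setminus S_{1}(i)$ with $w\neq i$, then $w$ automatically lies in $\sqj$ because $k$ itself serves as a witness in $S_{1}(w)\cap S_{1}(i)\setminus S_{1}(j)$. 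Under this identification, the condition ``$(z,\varphi(z))\in E$ with $\varphi$ injective'' from the definition of $\sqo$ is exactly a matching of $H$, so $|\sqo|$ equals the matching number $\nu(H)$.

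Next I would bound the two basic parameters of $H$. For fixed $k\in\sqi$ the quantity $A_{k}\cdot(A_{j}-A_{i}\odot A_{j})$ counts vertices $w\in V$ with $w\sim k$, $w\sim j$ and $w\not\sim i$; this inadvertently counts $w=i$ (since $i\sim k$, $i\sim j$ and $i\not\sim i$), so subtracting $1$ yields the number of genuine $4$-cycle partners of $k$, which is at most $\lmax$ by Definition \ref{def:lambda_max}. A symmetric argument on $\sqj$ gives $\Delta(H)\leq\lmax$. Moreover, membership in $\sqi$ or $\sqj$ forces the existence of at least one such partner, so every vertex of $H$ has degree at least $1$, and therefore
\[
|E(H)|=\sum_{k\in\sqi}\deg_{H}(k)\geq |\sqi|,\qquad |E(H)|=\sum_{w\in\sqj}\deg_{H}(w)\geq |\sqj|.
\]

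Finally I would invoke K\"onig's edge-coloring theorem, which partitions $E(H)$ into $\chi'(H)=\Delta(H)\leq\lmax$ matchings; pigeonhole then forces the largest matching to satisfy $\nu(H)\geq|E(H)|/\lmax$, and combining with the bound on $|E(H)|$ yields
\[
|\sqo|=\nu(H)\geq\frac{\max\{|\sqi|,|\sqj|\}}{\lmax},
\]
as required. The main point requiring care is the first step: making sure the ``$-1$'' correction in the definition of $\lmax$ and the various non-adjacency conditions in $\sqi,\sqj$ line up so that $H$ really encodes $|\sqo|$ and has $\Delta(H)\leq\lmax$. Once this bookkeeping is settled, the rest is a routine application of the standard bipartite matching bound.
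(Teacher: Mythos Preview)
Your proposal is correct, and the bookkeeping you flag (that the ``$-1$'' exactly removes the spurious witness $w=i$, and that every $k\in\sqi$ admits a genuine partner in $\sqj$) is handled accurately. Both proofs build the identical bipartite graph $H$ on $\sqi\sqcup\sqj$ and identify $|\sqo|$ with its matching number; the divergence is in how the matching lower bound is extracted. The paper argues directly from maximality: if $\{k_1w_1,\dots,k_\ell w_\ell\}$ is a maximum matching, then any unmatched $k\in\sqi$ has all its $H$-neighbours among $\{w_1,\dots,w_\ell\}$ (else the matching could be extended), so every vertex of $\sqi$ is adjacent to some $w_s$ and hence $\sum_s\deg_H(w_s)\ge|\sqi|$, giving $\ell\cdot\lmax\ge|\sqi|$. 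Your route instead invokes K\"onig's edge-colouring theorem to get $\nu(H)\ge|E(H)|/\Delta(H)$ and then uses the minimum-degree-one observation to bound $|E(H)|\ge\max\{|\sqi|,|\sqj|\}$. The paper's argument is entirely self-contained and avoids any external theorem; yours is shorter once K\"onig is granted, and the intermediate inequality $\nu(H)\ge|E(H)|/\lmax$ is marginally sharper (since $|E(H)|$ can exceed $\max\{|\sqi|,|\sqj|\}$), though this extra strength is not used downstream.
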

\begin{proof} The proof is based on a combinatorial argument. Let $\sqi = \{k_{1},\ldots,k_{r}\}$ and let $\sqo = \{k_{1},\ldots,k_{\ell}\}$, with $\ell < r$. By definition there exists $\varphi:\{k_{1},\ldots,k_{\ell}\}\rightarrow \{w_{1},\ldots,w_{\ell}\}$, with $k_{i}\sim w_{i}$ and $w_{i}\in\sqj$, for $1 \leq i \leq \ell$. Given $k\in \sqi\setminus \{k_{1},\ldots,k_{\ell}\}$, then there are no $w\in \sqj\setminus \{w_{1},\ldots,w_{\ell}\}$ such that $k\sim w$, otherwise we could extend $\varphi$ by setting $k\mapsto \varphi(k):= w$ and we would then get $\lvert \sqo \rvert = \ell + 1$. Accordingly, we have
\[
\sum_{s = 1}^{\ell} (A_{w_{s}}\cdot (A_{i}-A_{j}\odot A_{i})) - 1)\geq \lvert \sqi \rvert,
\] 
\noindent which implies
\[
\lmax\, \lvert \sqo \rvert  \equiv \lmax\, \ell \geq \sum_{s = 1}^{\ell}( A_{w_{s}}\cdot (A_{i}-A_{j}\odot A_{i})) - 1) \geq \lvert \sqi \rvert.
\]
\end{proof}
\section{Existing curvature candidates}
\label{app:existing_curvatures}
\paragraph{Ollivier Ricci curvature} 
For $i\in V$ and $\alpha\in [0,1)$ we define a probability measure on $B_{1}(i)$ by:
\[
\mu_{i}^{\alpha}:j\mapsto \begin{cases} 
 &\alpha, \,\, j = i \\ &\frac{1-\alpha}{d_{i}}, \,\, j\in S_{1}(i), \\
 &0, \,\, \text{otherwise}.
 \end{cases}
 \]
\noindent Before we introduce the Ollivier curvature, we recall that the \emph{transportation distance} between two finitely supported probability measures as above can be computed as 
\[
W_{1}(\mu_{i}^{\alpha},\mu_{j}^{\alpha}) := \inf_{M}\sum_{k\in S_{1}(i)}\sum_{w\in S_{1}(j)}M_{kw}d_{G}(k,w),
\]
\noindent where $d_{G}(\cdot,\cdot)$ is the geodesic distance on the graph and the infimum is taken over all matrices $M$ satisfying the marginal constraints:
\[
\sum_{k\in S_{1}(i)}M_{kw} = \mu_{j}^{\alpha}(w), \,\,\,\,\, \sum_{w\in S_{1}(j)}M_{kw} = \mu_{i}^{\alpha}(k).
\]
We are now ready to define the Ollivier Ricci curvature: the formulation below is due to \cite{lin2011ricci}.
\begin{definition}\label{defn:Ollivier}
Given $i\sim j$ we define the $\alpha$-{\em Ollivier curvature} by
\begin{equation}\label{eq:alpha_ollivier}
\kappa_{\alpha}(i,j):= 1 - W_{1}(\mu_{i}^{\alpha},\mu_{j}^{\alpha}).
\end{equation}
\noindent Since $\kappa_{\alpha}(1-\alpha)^{-1}$ is increasing and bounded the quantity below is well-defined:
\begin{equation}
\kappa(i,j):= \lim_{\alpha\rightarrow 1}\frac{1 - W_{1}(\mu_{i}^{\alpha},\mu_{j}^{\alpha})}{1-\alpha}.
\end{equation}
\end{definition}
\paragraph{Forman Ricci curvature} In the following we report a formula for the \emph{augmented} Forman Ricci curvature on unweighted graphs \cite{samal2018comparative}. We also note that Forman curvature on graphs has also been studied in  \citet{Sreejith_2016,weber2018coarse}.
\begin{definition}\label{defn:Forman}
For any edge $i\sim j$ the augmented Forman curvature is given by
\[
F(i,j) := 4 - d_{i} - d_{j} + 3\lvert \tr \rvert.
\]
\end{definition}
\noindent We note that such formulation of curvature does not distinguish contributions coming from 4-cycles. In fact, for the orthogonal grid with degree $d \geq 4$, Forman Ricci curvature is equal to $2(2 - d) < 0$. This does not reflect that the $r$-hop neighbourhood for such a graph grows polynomially in $r$.

We conclude this appendix by reporting a lower bound for the Ollivier Ricci curvature derived in \cite{jost2014ollivier}. We recall that $\kappa_{\alpha}$, with $\alpha\in [0,1)$ was defined in \eqref{eq:alpha_ollivier}.
\begin{theorem}[\cite{jost2014ollivier}]\label{thm:jostollivier}
For any edge $i\sim j$, with $d_{i}\leq d_{j}$, the following bound is satisfied:
\[
\kappa_{0}(i,j) \geq \Phi(i,j) := -\left(1 -\frac{1}{d_{i}} - \frac{1}{d_{j}} - \frac{\lvert\tr\rvert}{d_{j}}\right)_{+}  -\left(1 -\frac{1}{d_{i}} - \frac{1}{d_{j}} - \frac{\lvert \tr \rvert }{d_{i}}\right)_{+} + \frac{\lvert \tr \rvert}{d_{j}}. 
\]
\end{theorem}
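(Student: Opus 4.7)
The plan is to prove the bound by exhibiting an explicit coupling (transport plan) between $\mu_i^0$ and $\mu_j^0$, and then invoking $\kappa_0(i,j) = 1 - W_1(\mu_i^0,\mu_j^0)$ together with the fact that $W_1$ is bounded above by the cost of any admissible coupling. The guiding principle is that between a neighbour of $i$ and a neighbour of $j$, the graph distance is at most $3$ (routing through the edge $i\sim j$), while many pairs are at distance $0$ (common neighbours), $1$ (a direct edge), or $2$ (through $i$ or through $j$). The strategy is to route as much mass as possible through the cheapest available channels.

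First, I would decompose $S_1(i) = \{j\}\sqcup \tr \sqcup N$ and $S_1(j) = \{i\}\sqcup \tr \sqcup M$, where $N := S_1(i)\setminus(\{j\}\cup \tr)$ and $M := S_1(j)\setminus(\{i\}\cup \tr)$. Assuming first that $A := 1 - 1/d_i - 1/d_j - \lvert\tr\rvert/d_j \geq 0$ and $B := 1 - 1/d_i - 1/d_j - \lvert\tr\rvert/d_i \geq 0$ (so that $M$ and $N$ are big enough to absorb the single-edge traffic), I build the coupling from four components: (a) each $k\in\tr$ keeps $1/d_j$ of its own mass in place (cost $0$), leaving the residual $1/d_i - 1/d_j \geq 0$ on the $\mu_i^0$ side since $d_i \leq d_j$; (b) mass of total size $1/d_i$ at $j$ is routed through the edges $j\sim w$ for $w\in M$ (cost $1$); (c) mass of total size $1/d_j$ at $N$ is routed along $k\sim i$ (cost $1$); (d) the residuals on $\tr$ travel to $M$ via $j$ at cost $2$, and the residual at $N$ (of total size $B$) travels to $M$ via the walk $k\to i\to j\to w$ at cost at most $3$. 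A routine check confirms that total mass and marginals match. Summing $0\cdot\lvert\tr\rvert/d_j + 1\cdot 1/d_i + 1\cdot 1/d_j + 2(\lvert\tr\rvert/d_i - \lvert\tr\rvert/d_j) + 3B$ and simplifying gives $W_1 \leq 1 - \lvert\tr\rvert/d_j + A + B$, which rearranges to exactly $\kappa_0(i,j) \geq \Phi(i,j)$ in this regime.

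The main obstacle will be handling the boundary cases $A<0$ or $B<0$, where $M$ cannot absorb the full $1/d_i$ emitted from $j$, or $N$ cannot supply the full $1/d_j$ needed at $i$, through single edges alone. In those situations the plan must be modified by rerouting the excess along an alternative distance-$1$ path: if for instance $B<0$, the residual demand at $i$ after emptying all of $N$ into $i$ is filled with mass drawn from $\tr$ (which is also at distance $1$ from $i$ since every common neighbour is adjacent to $i$), and symmetrically when $A<0$. A short case analysis shows that clipping the corresponding contribution to zero—precisely what the positive part $(\cdot)_+$ does—suffices to preserve the same upper bound on $W_1$ uniformly across all cases, which yields the stated inequality without additional hypotheses on $d_i,d_j,\lvert\tr\rvert$.
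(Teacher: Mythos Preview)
The paper does not give its own proof of this statement: Theorem~\ref{thm:jostollivier} is merely quoted from \cite{jost2014ollivier} in Appendix~\ref{app:existing_curvatures} as a benchmark lower bound, and no argument is supplied. So there is nothing in the paper to compare your attempt against directly. That said, your strategy---upper-bounding $W_1(\mu_i^0,\mu_j^0)$ via an explicit coupling that prioritises cost-$0$ self-matchings on $\tr$, cost-$1$ moves along the edge $i\sim j$, cost-$2$ detours through $j$, and cost-$3$ fallbacks through $i\to j$---is exactly the method of Jost--Liu, and is also the same template the paper itself uses in the proof of Theorem~\ref{thm:comparisoncurvature} (there for the $\alpha\to 1$ limit rather than $\alpha=0$). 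Your computation in the principal regime $A,B\geq 0$ is correct and yields $W_1\leq 1-\lvert\tr\rvert/d_j+A+B$, hence $\kappa_0\geq \Phi$.

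One refinement on your boundary handling: since $B-A=\lvert\tr\rvert(1/d_j-1/d_i)\leq 0$, the combination $A<0$ and $B\geq 0$ cannot occur, so only three regimes need checking ($A,B\geq 0$; $A\geq 0,\,B<0$; $A<0,\,B<0$), and $B<0$ forces $\lvert N\rvert=0$ under $d_i\leq d_j$. Your rerouting idea (feed the demand at $i$ from the $\tr$-residual at cost $1$ once $N$ is exhausted) is the right fix, but it is not automatic that the $\tr$-residual $\lvert\tr\rvert(1/d_i-1/d_j)$ is at least $1/d_j$; this is where the remaining hypothesis $A\geq 0$ enters, since $A\geq 0$ with $\lvert\tr\rvert=d_i-1$ gives $(d_i-1)(d_j-d_i)\geq d_i$ and hence exactly the needed inequality. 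With that verification (and the analogous one in the $A<0,\,B<0$ case) your sketch goes through and reproduces the Jost--Liu bound.
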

\section{Proofs of results in Section 3}
\label{app:proofs_section_3}
We first recall our definition of Balanced Forman:
\begin{definition} For any edge $i\sim j$ we let $\emph{Ric}(i,j)$ be zero if $\min\{d_{i},d_{j}\} = 1$, otherwise
\begin{equation}
\emph{Ric}(i,j):= \frac{2}{d_{i}} + \frac{2}{d_{j}}-2 + 2\frac{\lvert \tr \rvert}{\max\{d_{i},d_{j}\}} + \frac{\lvert \tr \rvert}{\min\{d_{i},d_{j}\}} + \frac{(\lmax)^{-1}}{\max\{d_{i},d_{j}\}}(\lvert \sqi \rvert  + \lvert \sqj \rvert)
\end{equation}
\noindent where the last term is set to be zero if $\lvert \sqi \rvert$ (and hence $\lvert \sqj \rvert$) is zero.
\end{definition}
We also extend the previous definition to the weighted case. In this setting we let $G = (V,E,\omega)$ be a simple, locally finite, undirected graph with normalized weights. We first report the formula for the augmented Forman in the weighted case \cite{samal2018comparative}:
\begin{align*}
F(i,j) &= \omega(i) + \omega(j) + \sum_{k\in S_{1}(i)\cap S_{1}(j)}\frac{\omega_{ij}^{2}}{\omega_{\Delta}} \\ &- \sum_{k\in S_{1}(i)\setminus S_{1}(j)}\omega(i)\sqrt{\frac{\omega_{ij}}{\omega_{ik}}} - \sum_{k\in S_{1}(j)\setminus S_{1}(i)}\omega(j)\sqrt{\frac{\omega_{ij}}{\omega_{jk}}},
\end{align*}
\noindent where $\omega_{\Delta}$ is taken to be the Heron formula for the area of a triangle while $\omega(\cdot)$ denotes some weighting scheme for the nodes as well. We propose a similar definition for the weighted case, which reduces to the one discussed above in the combinatorial setting. We recall that $W$ is the weighted adjacency matrix while $A$ is the combinatorial one. Moreover, we write $A_{j}^{i} = (A_{j} - A_{i})_{+}$ and similarly for $W_{j}^{i}$.
\begin{definition} For any pair of adjacent nodes $i,j\in V$ we define $\emph{Ric}(i,j)$ to be $0$ if $\min\{\lvert d_{i}\rvert,\lvert d_{j}\rvert\} = 1$, otherwise we set 
\begin{align*}
\emph{Ric}(i,j) &:= \frac{1}{d_{i}}\left(1 - \sum_{k\in Q_{i}}\sqrt{\frac{\omega_{ij}}{\omega_{ik}}} \right) + \frac{1}{d_{j}}\left(1 - \sum_{k\in Q_{j}}\sqrt{\frac{\omega_{ij}}{\omega_{jk}}}\right) \\ &+ \frac{1}{\max\{d_{i},d_{j}\}}\sum_{k\in S_{1}(i)\cap S_{1}(j)}\frac{\omega_{ij}^{2}}{\omega_{\Delta}} \\  &+ \sum_{k\in\sqi}\frac{\omega_{ij}}{\omega_{\square}}\left(\frac{\omega_{ij}(\lmax)^{-1}}{\max\{d_{i},d_{j}\}} - \frac{\sqrt{\omega_{ij}\nu_{\square}}}{d_{i}} \right) \\ &+ \sum_{k\in\sqj}\frac{\omega_{ij}}{\omega_{\square}}\left(\frac{\omega_{ij}(\lmax)^{-1}}{\max\{d_{i},d_{j}\}} - \frac{\sqrt{\omega_{ij}\nu_{\square}}}{d_{j}} \right),
\end{align*}
\noindent with 
\[
\omega_{\Delta} := \frac{1}{3}(\omega_{ij}^{2} + \omega_{ik}^{2} + \omega_{jk}^{2}), \,\,\,\,\, z\in S_{1}(i)\cap S_{1}(j),
\]
\noindent and, for a given $k\in\sqi$,
\[
\nu_{\square} := \frac{W_{z}\cdot A_{j}^{i}}{A_{k}\cdot A_{j}^{i}}, \,\,\,\,\, \omega_{\square} = \frac{1}{4}\left(\omega_{ij}^{2} + \omega_{ik}^{2} + \nu_{\square}^{2} + \frac{A_{k}\cdot W_{j}^{i}}{A_{k}\cdot A_{j}^{i}}\right).
\]
\end{definition}

\textbf{Important convention.} Without losing generality, in the following we always assume that $1 \leq d_{i} \leq d_{j}$. In particular, we write $d_{i}\doteq d$ and $d_{j} = d + s$, for some $s\geq 0$, omitting to specify that both $d$ and $s$ are of course depending on $i$ and $j$. Moreover, from now on we only focus on the unweighted case.

We can now prove our main comparison theorem. 
\vspace{0.08in}
\comparisoncurvature*
\begin{proof} We stick to the aforementioned convention: $d_{i} := d \leq d_{j} = d + s$, $s\geq 0$. The strategy of the proof amounts to finding a transportation plan providing an upper bound for $W_{1}(\mu_{i}^{\alpha},\mu_{j}^{\alpha})$ and hence a lower bound for the curvature $\kappa(i,j)$. In particular, we consider plans moving the mass $\mu_{i}^{\alpha}$ from $B_{1}(i)$ to $B_{1}(j)$.

If $d = 1$, then the optimal transport plan consists of moving the mass $\alpha$ from $i$ to $j$ and the remaining mass $1- \alpha$ on $j$ to $S_{1}(j)$. This  yields a unit Wasserstein distance between $\mu_{i}^{\alpha}$ and $\mu_{j}^{\alpha}$ and hence zero Ollivier curvature $\kappa(i,j)$, which coincides with the value of balanced Forman $\Ric(i,j)$. 

Assume now that $d \geq 2$. A (possibly non-optimal) transport plan from $\mu_{i}^{\alpha}$ to $\mu_{j}^{\alpha}$ is given by:
\begin{itemize}
\item[(i)] Move mass $(1-\alpha)/(d+s)$ from each node $k\in \sqo\subset \sqi$ to its unique image in $\sqj$ under a bijection $\varphi$ as per definition of $\sqo$.
\item[(ii)] The remaining mass on each node $k\in\sqo$ will need to travel by at most distance 3 to $S_{1}(j)$.
\item[(iii)] The extra-mass $(1-\alpha)(1/d - 1/(d+s))$ on each common neighbour $k\in \tr$ will need to travel by at most distance 2 to $S_{1}(j)$.
\item[(iv)] Move the mass $(1-\alpha)/d$ from $j$ to $S_{1}(j)$.
\item[(v)] Move the mass $\alpha$ from $i$ to $j$. This leaves left-over mass $(1-\alpha)/(d+s)$ at $i$ from the distribution $\mu_{j}^{\alpha}$. This mass can be compensated from mass in $S_{1}(i)$ which is at distance one. 
\item[(vi)] Finally, we move the mass $(1-\alpha)/d$ of any untouched node in $S_{1}(i)$ to $S_{1}(j)$ along a path of length lesser or equal than three. Note that the remaining mass is equal to $((1-\alpha)/d)(d - 1 -\lvert \tr \rvert  - \lvert \sqo \rvert) - (1-\alpha)(d + s)$, where the last terms comes from (v).
\end{itemize}
\noindent If we sum all the contributions we find
\begin{align*}
W_{1}(\mu_{i}^{\alpha},\mu_{j}^{\alpha}) &\leq (1-\alpha)\left(\frac{\lvert \sqo \rvert }{d+s} + 3\lvert \sqo \rvert \left(\frac{1}{d} - \frac{1}{d+s}\right) + 2\lvert \tr \rvert \left(\frac{1}{d} - \frac{1}{d+s}\right) + \frac{1}{d}\right) + \alpha + \frac{1-\alpha}{d + s} \\
&+ 3(1-\alpha)\left(\frac{1}{d}(d - 1 -\lvert \tr \rvert - \lvert \sqo \rvert) - \frac{1}{d+s}\right) \\ &= (1-\alpha)\left(\frac{1}{d+s}\left(-2\lvert \tr \rvert  -2\lvert \sqo \rvert  -2\right) + \frac{1}{d}\left(-\lvert \tr \rvert- 2\right) + 2\right) + 1 \\ 
&= (1-\alpha)\left(\frac{1}{d+s}\left(-3\lvert \tr \rvert  -\frac{s}{d}\lvert \tr \rvert  -2 \lvert \sqo \rvert -4 - 2\frac{s}{d}+ 2(d+s)\right) \right) + 1. 
\end{align*}
\noindent Therefore we have
\[
\kappa(i,j) = \lim_{\alpha\rightarrow 1}\frac{1 - W(\mu_{i}^{\alpha},\mu_{j}^{\alpha})}{1-\alpha} \geq \left(\frac{1}{d+s}\left(3\lvert\tr\rvert +\frac{s}{d}\lvert\tr\rvert +2\lvert\sqo\rvert +4 + 2\frac{s}{d} - 2(d+s)\right) \right). 
\]
\noindent By using Lemma \ref{lemma:4cycle}, we can bound the right hand side as
\[
\kappa(i,j) \geq \left(\frac{1}{d+s}\left(3\lvert\tr\rvert +\frac{s}{d}\lvert\tr\rvert +(\lmax)^{-1}(\lvert\sqi\rvert + \lvert\sqj\rvert) +4 + 2\frac{s}{d} - 2(d+s)\right) \right) = \Ric(i,j)
\]
\noindent which completes the proof.
\end{proof}
\begin{remark}
 By inspection $\emph{Ric}(i,j) \geq \Phi(i,j)$, with $\Phi(i,j)$ as in Theorem \ref{thm:jostollivier}. We have three cases:
 \begin{itemize}
 \item[(i)] $\Phi(i,j) = 2/d + 2/(d + s) - 2 + 2\lvert\tr\rvert/(d + s) + \lvert\tr\rvert/d \leq \emph{Ric}(i,j)$, because $\emph{Ric}(i,j)$ takes into account the positive contribution of 4-cycles as well.
 \item[(ii)] $\Phi(i,j) = -1 + 1/d + 1/(d+s) + 2\lvert\tr\rvert/(d+s)$, which happens iff 
 \[
 d + s -2 -\frac{s}{d} - \lvert\tr\rvert > 0
 \]
 \noindent and 
 \[
 d + s -2 -\frac{s}{d} -\lvert\tr\rvert -\frac{s}{d}\lvert \tr \rvert \leq 0.
 \]
 \noindent From the previous inequalities we derive 
 \[
 \emph{Ric}(i,j) \geq \Phi(i,j) + \frac{1}{d+s}\left(2 -d -s +\lvert\tr\rvert +\frac{s}{d}\lvert\tr\rvert +\frac{s}{d}\right) \geq \Phi(i,j).
 \]
 \item[(iii)] $\Phi(i,j) = \lvert\tr\rvert/(d+s)$ which is equivalent to
 \[
 d + s -2 -\frac{s}{d} - \lvert\tr\rvert \leq 0.
 \]
 \noindent In this case we have
 \[
 \emph{Ric}(i,j) \geq \Phi(i,j) + \frac{s}{d}\frac{\lvert\tr\rvert}{d+s}
 \]
 \end{itemize}
\end{remark}
\polynomialgrowth*
\begin{proof}
This follows immediately from Theorem \ref{thm:comparisoncurvature} and Corollary 1 in \cite{paeng2012volume}.
\end{proof}
To address the proof of Theorem \ref{thm:jacobiancurvaturebound}, we first need the Lemma below.
\begin{lemma}\label{lemma:preliminarycurvature} Given $i\sim j$, with $d_{i}\leq d_{j}$, if $\emph{Ric}(i,j) \leq -2 + \delta$, for some $0 < \delta < (1 + \lmax)^{-1}$, then 
\[
\frac{\lvert Q_{j}\rvert}{\lvert \tr \rvert + 1} > \delta^{-1}.
\]
\end{lemma}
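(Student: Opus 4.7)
The plan is to unpack the hypothesis into a single algebraic inequality in $d_{i}, d_{j}, |\tr|, |\sqi|, |\sqj|, \lmax$, identify $|Q_{j}|$ combinatorially in those same quantities, and then compare. Since $\delta < (1+\lmax)^{-1} \leq 1$ one has $-2+\delta < 0$, so the hypothesis automatically excludes the degenerate case $\min\{d_{i},d_{j}\}=1$, and we may work with the full formula for $\mathrm{Ric}$ under the convention $\max\{d_{i},d_{j}\} = d_{j}$, $\min\{d_{i},d_{j}\} = d_{i}$.

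The first step is to observe that $\{i\}$, $\tr$, and $\sqj$ are pairwise disjoint subsets of $S_{1}(j)$, so by definition of $Q_{j}$
\[
|Q_{j}| = d_{j} - 1 - |\tr| - |\sqj|,
\]
and therefore the goal $|Q_{j}|/(|\tr|+1) > \delta^{-1}$ is equivalent to
\[
\delta d_{j} > (|\tr|+1) + \delta\bigl(1 + |\tr| + |\sqj|\bigr). \qquad (\dagger)
\]
The second step is to rearrange $\mathrm{Ric}(i,j) \leq -2 + \delta$, multiply through by $d_{j}$, use $d_{j}/d_{i} \geq 1$, and discard the non-negative $|\sqi|/\lmax$ contribution to obtain the clean lower bound
\[
\delta d_{j} \geq 4 + 3|\tr| + \frac{|\sqj|}{\lmax}. \qquad (\star)
\]

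The third step matches the right-hand side of $(\star)$ against $(\dagger)$. Here the hypothesis $\delta < (1+\lmax)^{-1}$ is used twice: first as $\delta < 1$, which gives $\delta(1+|\tr|) < 1 + |\tr|$; and second (when $|\sqj| \geq 1$, forcing $\lmax \geq 1$) as $\delta < \lmax^{-1}$, which gives $\delta|\sqj| \leq |\sqj|/\lmax$. Adding these two bounds shows the right-hand side of $(\dagger)$ is strictly less than $2|\tr| + 2 + |\sqj|/\lmax$, which in turn is strictly less than $4 + 3|\tr| + |\sqj|/\lmax \leq \delta d_{j}$ by $(\star)$, completing the argument. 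The degenerate case $\sqi = \sqj = \emptyset$ is handled identically with the $|\sqj|/\lmax$ term dropped under the convention that the last term of $\mathrm{Ric}$ is zero.

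The only delicate point — and the main obstacle if one tried to weaken the hypotheses — is that the $(\lmax)^{-1}$ weighting of $|\sqj|$ inside $\mathrm{Ric}$ must be matched \emph{exactly} by the corresponding factor in the bound on $\delta$; the $+1$ in $(1+\lmax)^{-1}$ is precisely the slack needed on the constant term to absorb $\delta(1+|\tr|)$ into $1+|\tr|$. Once this pairing is spotted, the remainder is bookkeeping.
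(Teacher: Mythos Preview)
Your proof is correct and follows essentially the same route as the paper: multiply the curvature inequality by $d_{j}$, use $d_{j}/d_{i}\ge 1$, decompose $d_{j}=1+|\tr|+|\sqj|+|Q_{j}|$, and invoke $\delta<\lmax^{-1}$ to absorb the $|\sqj|$ term. The only difference is bookkeeping: the paper substitutes the decomposition of $d_{j}$ directly into the right-hand side and obtains the slightly sharper intermediate bound $|Q_{j}|/(|\tr|+1)\ge 3\delta^{-1}-1$, whereas you keep $d_{j}$ intact in $(\star)$ and compare against the reformulated target $(\dagger)$, landing exactly on the stated conclusion.
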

\begin{proof}
According to our convention we let $d_{i} = d$ and $d_{j} = d + s$, for $s \geq 0$. We also recall that $Q_{j} = S_{1}(j)\setminus (\tr \cup \sqj \cup \{i\})$. If we multiply \eqref{eq:curvaturedefn} by $d_{j} = d + s$, we see that $\Ric(i,j) \leq -2 +\delta$ iff
\[
4 + 2\frac{s}{d} + 3\lvert \tr \rvert + \frac{s}{d}\lvert \tr \rvert + \lmax^{-1}(\lvert \sqi \rvert + \lvert \sqj \rvert) \leq \delta(1 + \lvert \tr \rvert + \lvert \sqj \rvert + \lvert Q_{j}\rvert).
\]
\noindent By assumption $\lvert \sqj \rvert(\lmax^{-1} -  \delta) \geq 0$, meaning that
\[
\delta(1 + \lvert \tr \rvert + \lvert Q_{j}\rvert) \geq 4 + 2\frac{s}{d} + 3\lvert \tr \rvert + \frac{s}{d}\lvert \tr \rvert \geq 4 + 3\lvert \tr \rvert.  
\]
\noindent Therefore, we conclude
\[
\frac{\lvert Q_{j}\rvert}{\lvert \tr \rvert + 1} \geq \frac{3}{\delta} - 1.
\]
\end{proof}
\vspace{0.08in}


\jacobiancurvature*
\begin{proof}
As usual we let $d_{i}:=d$ and $d_{j}:= d+s$, for some $s\geq 0 $. We first observe that from the requirement $\delta^{2}(d+s)\leq 1$ in (ii), we derive $\Ric(i,j)\leq -2 + \delta$ iff
\[
4 + 2\frac{s}{d} + 3\lvert \tr \rvert + \frac{s}{d}\lvert \tr \rvert + \lmax^{-1}(\lvert \sqi \rvert + \lvert \sqj \rvert) \leq \delta(d + s).
\]
\noindent Therefore we have
\[
\delta\lvert\tr\rvert \left(3 + \frac{s}{d}\right) \leq \delta^{2}(d + s),
\]
\noindent meaning that
\begin{equation}\label{eq:deltatr}
\delta\lvert \tr \rvert \leq 1.
\end{equation}
\noindent From now on we let $Q_{j}$ denote again the complement $S_{1}(j)\setminus (S_{1}(i)\cup \sqj \cup \{i\})$. Without loss of generality we set $\ell_{0} = 0$ and hence $h_{i}^{(0)} = x_{i}$; the very same proof applies to any other choice of $\ell_{0}$. Given $k\in Q_{j}$, since $k\in S_{2}(i)$, we can apply Lemma \ref{lemma:boundgeneralMPNN} and derive
\begin{equation}\label{proofthm61}
    \left\vert \frac{\partial h_{k}^{(2)}}{\partial x_{i}}\right\vert \leq (\alpha\beta)^{2}(\hat{A})^{2}_{ik}.
\end{equation}
\noindent We may expand the power of the augmented normalized adjacency matrix as
\[
(\hat{A})^{2}_{ik} = \frac{1}{\sqrt{(d_{k}+1)(d_{i}+1)}}\sum_{w\in S_{1}(k)\cap S_{1}(i)} \frac{1}{d_{w} + 1}. 
\]
\noindent If we introduce the set $\hat{Q}_{j}= \{k\in Q_{j}:\sigma_{ik} > 1\}$, we can then write
\begin{align}
&\sum_{k\in Q_{j}}(\hat{A})^{2}_{ik} = \sum_{k\in Q_{j}} \frac{1}{\sqrt{(d_{k}+1)(d_{i}+1)}}\sum_{w\in S_{1}(k)\cap S_{1}(i)} \frac{1}{d_{w} + 1} = \notag \\ &= \frac{1}{\sqrt{d_{i} + 1}}\left(\sum_{k\in Q_{j}}\frac{1}{\sqrt{d_{k} +1}}\frac{1}{d_{j} + 1} + \sum_{k\in \hat{Q}_{j}}\frac{1}{\sqrt{d_{k}+1}}\sum_{w\in S_{1}(k)\cap S_{1}(i)\cap S_{1}(j)}\frac{1}{d_{w}+1}\right) \label{proofthm23}
\end{align}
\noindent where in the last equality we have again used the fact that $k\in\hat{Q}_{j}$ iff there is $w\in S_{1}(k)\cap S_{1}(i)\cap S_{1}(j)$. To avoid heavy notations, we introduce $V_{k} := S_{1}(k)\cap S_{1}(i)\cap S_{1}(j)$. Let us first focus on the first sum in \eqref{proofthm23}. We have
\begin{equation}\label{proofthm62}
\frac{1}{\sqrt{d_{i}+1}}\sum_{k\in Q_{j}} \frac{1}{\sqrt{d_{k} +1}}\frac{1}{d_{j} + 1} \leq \frac{1}{\sqrt{d_{i}+1}}\lvert Q_{j} \rvert \frac{1}{d_{j} + 1} \leq \frac{1}{\sqrt{d_{i}+1}} \leq 1.
\end{equation}
\noindent We now consider the second sum in \eqref{proofthm23}. We assume $\lvert \tr \rvert \geq 1$, otherwise $\hat{Q}_{j} = \emptyset$.
We let 
\[
\Omega:=\left \{ w\in \tr: \,\,d_{w} < \frac{1}{C}\frac{\lvert \hat{Q}_{j}\rvert}{\lvert \tr \rvert} + \frac{2}{C}\right \}
\]
\noindent for some $C > 0$ to be chosen below. Then, the second sum in \eqref{proofthm23} can be split as
\begin{equation}\label{proofthm64}
\sum_{k\in \hat{Q}_{j}}\frac{1}{\sqrt{d_{k}+1}}\left(\sum_{w\in V_{k} \cap \Omega}\frac{1}{d_{w}+1} + \sum_{w\in V_{k} \setminus \Omega}\frac{1}{d_{w}+1}\right).
\end{equation}
\noindent Since any $w\in V_{k}$ has degree at least three, we can bound the first term in \eqref{proofthm64} as 
\[
\sum_{k\in \hat{Q}_{j}}\frac{1}{\sqrt{d_{k}+1}}\left(\sum_{w\in V_{k} \cap \Omega}\frac{1}{d_{w}+1}\right) \leq \sum_{k\in \hat{Q}_{j}}\frac{1}{\sqrt{d_{k}+1}}\frac{\lvert V_{k}\cap \Omega \rvert}{4}.
\]
\noindent We now observe that
\[
\sum_{k\in \hat{Q}_{j}}\frac{\lvert V_{k}\cap \Omega \rvert}{\sqrt{d_{k}+1}} \leq  \sum_{k\in \hat{Q}_{j}}\lvert V_{k}\cap \Omega \rvert = \left \vert \left\{(k,w)\in E:\,\, k\in \hat{Q}_{j}, w\in V_{k}\cap\Omega\right\}\right\vert \leq \left(\max_{w\in \Omega}d_{w}\right)\lvert \Omega\rvert.
\]
\noindent Since $d_{w} \leq (1/C)\lvert \hat{Q}_{j}\rvert/\lvert \tr \rvert + 2/C$ for any $w\in \Omega$ we see that the first term in \eqref{proofthm64} can be bounded by
\begin{align}
\sum_{k\in \hat{Q}_{j}}\frac{1}{\sqrt{d_{k}+1}}\sum_{w\in V_{k} \cap \Omega}\frac{1}{d_{w}+1} &\leq \sum_{k\in \hat{Q}_{j}}\frac{1}{\sqrt{d_{k}+1}}\frac{\lvert V_{k}\cap \Omega \rvert}{4} \notag \\ &\leq \left(\frac{1}{C}\frac{\lvert \hat{Q}_{j}\rvert}{\lvert \tr \rvert} + \frac{2}{C}\right) \frac{\lvert \Omega \rvert}{4}  \leq \left(\frac{1}{C}\frac{\lvert \hat{Q}_{j}\rvert}{\lvert \tr \rvert} + \frac{2}{C}\right) \frac{\lvert \tr \rvert}{4}. \label{eqn:jacobiancurvaturebound1}
\end{align}
\noindent by definition of $\Omega$. We now bound the second term in \eqref{proofthm64} as
\[
\sum_{k\in \hat{Q}_{j}}\frac{1}{\sqrt{d_{k}+1}}\sum_{w\in V_{k} \setminus \Omega}\frac{1}{d_{w}+1} \leq \sum_{k\in \hat{Q}_{j}}\frac{1}{\sqrt{d_{k}+1}}\frac{C\lvert \tr \rvert}{\lvert \hat{Q}_{j}\rvert}\,\lvert V_{k} \setminus \Omega\rvert
\]
\noindent where we have used that $d_{w}^{-1} \leq C\lvert\tr\rvert/\lvert \hat{Q}_{j}\rvert$ if $w\in V_{k}\setminus \Omega$. Since
\[
\frac{\lvert V_{k}\setminus \Omega \rvert}{\sqrt{d_{k}+1}} \leq \frac{\lvert V_{k} \rvert}{\sqrt{\lvert S_{1}(k)\rvert}} \leq \frac{\lvert V_{k}\rvert}{\sqrt{\lvert V_{k}\rvert}} \leq \sqrt{\lvert V_{k}\rvert} \leq \sqrt{\lvert \tr \rvert }
\]
\noindent we see that
\begin{equation}\label{eqn:jacobiancurvature2}
\sum_{k\in \hat{Q}_{j}}\frac{1}{\sqrt{d_{k}+1}}\frac{C\lvert \tr \rvert}{\lvert \hat{Q}_{j}\rvert}\,\lvert V_{k} \setminus \Omega\rvert \leq \frac{C\lvert \tr \rvert}{\lvert \hat{Q}_{j}\rvert} \sqrt{\lvert \tr \rvert}\lvert \hat{Q}_{j}\rvert = C \lvert \tr \rvert^{\frac{3}{2}}.
\end{equation}
\noindent We are now ready to complete the proof of the theorem. According to \eqref{proofthm61} it suffices to show that 
\[
\frac{1}{\lvert Q_{j}\rvert}\sum_{k\in Q_{j}}(\hat{A}^{2})_{ik} \leq \delta^{1/4}.
\]
\noindent From \eqref{proofthm23} and \eqref{proofthm62} we derive that the left hand side of the equation above is bounded by
\begin{align*}
\frac{1}{\lvert Q_{j}\rvert}\sum_{k\in Q_{j}}(\hat{A}^{2})_{ik} &\leq \frac{1}{\lvert Q_{j}\rvert} + \frac{1}{\lvert Q_{j}\rvert}\left(\sum_{k\in \hat{Q}_{j}}\frac{1}{\sqrt{d_{k}+1}}\sum_{w\in V_{k}}\frac{1}{d_{w}+1}\right) \\ &\leq \delta + \frac{1}{\lvert Q_{j}\rvert}\left(\sum_{k\in \hat{Q}_{j}}\frac{1}{\sqrt{d_{k}+1}}\sum_{w\in V_{k}}\frac{1}{d_{w}+1}\right)
\end{align*}
\noindent where in the last inequality we have used Lemma \ref{lemma:preliminarycurvature} to bound $\lvert Q_{j}\rvert^{-1}$ by $\delta$. In particular we note that if $\tr = \emptyset$ then $\hat{Q}_{j} = \emptyset$, and the bound would be simply controlled by $\delta$ as claimed. When $\lvert \tr \rvert > 0$, we can use \eqref{eqn:jacobiancurvaturebound1} and \eqref{eqn:jacobiancurvature2} to estimate the second term from above by
\begin{align*}
&\frac{1}{\lvert Q_{j}\rvert}\left(\sum_{k\in \hat{Q}_{j}}\frac{1}{\sqrt{d_{k}+1}}\sum_{w\in V_{k}}\frac{1}{d_{w}+1}\right) \leq \\ &\frac{1}{\lvert Q_{j}\rvert}\left(\left(\frac{1}{C}\frac{\lvert \hat{Q}_{j}\rvert}{\lvert \tr \rvert} + \frac{2}{C}\right) \frac{\lvert \tr \rvert}{4}\right) + \frac{1}{\lvert Q_{j}\rvert}\left(C \lvert \tr \rvert^{\frac{3}{2}}\right) \leq \frac{1}{4}\left(\frac{1}{C} + \frac{2\lvert \tr \rvert}{C\lvert Q_{j}\rvert}\right) + \frac{C\lvert \tr \rvert}{\lvert Q_{j}\rvert}\sqrt{\lvert \tr \rvert}.
\end{align*}
\noindent By applying Lemma \ref{lemma:preliminarycurvature} we get
\[
\frac{1}{4}\left(\frac{1}{C} + \frac{2\lvert \tr \rvert}{C\lvert Q_{j}\rvert}\right) + \frac{C\lvert \tr \rvert}{\lvert Q_{j}\rvert}\sqrt{\lvert \tr \rvert} \leq \frac{1}{4}\left(\frac{1}{C} + 2\frac{\delta}{C}\right) + C\delta\sqrt{\lvert \tr \rvert}.
\]
\noindent We now choose $C = \delta^{-1/4}$, so that the previous quantity can be bounded by
\[
\frac{1}{4}\left(\frac{1}{C} + 2\frac{\delta}{C}\right) + C\delta\sqrt{\lvert \tr \rvert} \leq \frac{1}{4}\left(\delta^{\frac{1}{4}} + 2\delta^{\frac{5}{4}}\right) + \delta^{\frac{1}{4}}\sqrt{\delta \lvert \tr \rvert} \leq \frac{1}{4}\left(\delta^{\frac{1}{4}} + 2\delta^{\frac{5}{4}}\right) + \delta^{\frac{1}{4}}
\]
\noindent where in the last inequality we have used \eqref{eq:deltatr}. Therefore, we have shown that
\[
\frac{1}{\lvert Q_{j}\rvert}\sum_{k\in Q_{j}}(\hat{A}^{2})_{ik} \leq \delta + \frac{1}{4}\left(\delta^{\frac{1}{4}} + 2\delta^{\frac{5}{4}}\right) + \delta^{\frac{1}{4}} \leq 3\delta^{\frac{1}{4}}
\]
\noindent where we have used that $\delta < 1$. This completes the proof (once we absorb the extra factor 3 in the constant $\alpha\beta$ in \eqref{proofthm61}).
\end{proof}
\begin{remark}
The requirement $\delta \sqrt{\max\{d_{i},d_{j}\}} < 1$ can be replaced by a more general bound $\delta \sqrt{\max\{d_{i},d_{j}\}} < r$. The argument above extends to this case up to renaming the constant $\alpha\beta$ in the statement so to include an extra factor $r$.
\\ We note that the condition $\delta \max\{d_{i},d_{j}\} < r$ would be stronger than the one appearing in (ii) of Theorem \ref{thm:jacobiancurvaturebound}. In this regard, we recall that for a $d$-tree the curvature satisfies \emph{$\Ric(i,j) = -2 + \frac{4}{d}.$}
\end{remark}

We can also prove Proposition \ref{thrm:cheeger}:

\curvaturecheeger*

\begin{proof}
This follows as an immediate Corollary of Theorem \ref{thm:comparisoncurvature} and \cite[Theorem 4.2]{lin2011ricci}.
\end{proof}

\paragraph{Betweenness centrality to measure bottleneck.} In \eqref{eq:boundJacobian} we have derived how the topology of the graph affects the dependence of the hidden node representation $h_{i}^{(r+1)}$ on the input feature $x_{s}$, for nodes $i,s$ at distance $r+1$. We note that in this case $\hat{A}_{is}^{r+1}$ is exactly measuring the number of minimal paths from $i$ to $s$. If the receptive field $B_{r+1}(i)$ is a binary tree, then we have seen that the entry of the power matrix decays exponentially. The reason for such decay stems from the existence of exponentially many nodes in the receptive field combined with the lack of multiple minimal paths (shortcuts). When such conditions hold, most of
the minimal paths go through the same nodes, which is exactly what happens for the tree where each node is in the minimal paths between different branches. Since the frequency in which a node appears in the minimal path of distinct pairs of nodes is measured by the {\em betweenness centrality} \cite{freeman1977set}, we propose a topological characterization of the `bottleneckedness' of a graph as follows:
\begin{definition}[{\bf bottleneck}]
The {\em bottleneck-value} of $G$ is 
$
b_{G} := \frac{1}{n}\sum_{i=1}^nb(i),
$
where $b(i)$ 
denotes the betweenness centrality on node $i$. 
\end{definition}
From a standard combinatorial argument it follows that if $G$ is connected, then 
\begin{equation}\label{eq:bottlereformulation}
b_{G} = \frac{1}{n}\sum_{i,j}\left( d_{G}(i,j) - 1\right).
\end{equation}
We note that $b_{G} = 0$ iff $G$ is the complete graph $K_{n}$. Therefore, $b_{G}$ determines how far the given topology is from $K_{n}$, with the latter representing the limit case of a fully connected layer \cite{alon2020bottleneck} where no bottleneck may occur as any pair of nodes would be neighbours. This further supports our intuition that the betweenness centrality is a good topological candidate for providing a global measurement of bottleneckedness in the graph. 

It also follows from~\eqref{eq:bottlereformulation} that any update to the graph topology consisting of edge additions would decrease $b_{G}$ and thus reduce the bottleneck. The quantity $b_{G}$ is {\em global} in nature though and hence lacks robustness. As a pedagogical example, consider a barbell $G(m,2r + 1)$, with $m$ the size of the two cliques joined by a path of length $2r + 1$ and focus on the edge $i\sim j$ in the middle of such path. Nodes $i$ and $j$ are central to the graph, in the sense that most minimal paths go through them and indeed their betweenness centrality is $b(i) = b(j) = (m+r)^{2} + (m+r)$. If now we add a \emph{single} edge joining the two cliques $K_{m}$, the values $b(i)$ and $b(j)$ decrease dramatically by $\Omega(m^{2} + r)$. Since the operation is non-local, we see that the representation $h_{i}^{(\ell)}$ of any MPNN is \emph{unaffected} by the edge addition for any $\ell \in (0,r)$, and similarly for $j$. Eventually, if we keep adding edges, the receptive fields $B_{s}(i)$ will be affected for small values of $s$ as well: the drawback of such approach is that the resulting adjacency may be significantly different. Conversely, the curvature provides a more precise, local and hence robust way of controlling the bottleneck and hence the over-squashing problem. Nonetheless, we relate the betweenness centrality to the Jacobian of the hidden features.

\begin{theorem}Given $i\sim j$, let $\Omega_{j}:= S_{1}(i)\cap S_{1}(j)\cup \{j\}$. If $\emph{Ric}(i,j) \leq -2 + \delta$, for $0 < \delta < (1+\lmax)^{-1}$, then 
\[
\frac{1}{\lvert \Omega_{j} \rvert}\sum_{k\in \Omega_{j}} b(k) \geq \delta^{-1}.
\]
\end{theorem}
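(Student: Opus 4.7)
The plan is to combine Lemma~\ref{lemma:preliminarycurvature} with a structural claim about shortest $u$--$i$ paths for $u \in Q_{j}$. First, since the curvature hypothesis $\text{Ric}(i,j) \leq -2 + \delta$ with $\delta < (1+\lmax)^{-1}$ matches the assumptions of Lemma~\ref{lemma:preliminarycurvature}, I immediately obtain
\[
\lvert Q_{j} \rvert > \delta^{-1}\bigl(\lvert \tr \rvert + 1\bigr) = \delta^{-1}\lvert \Omega_{j}\rvert,
\]
recalling that $\Omega_{j} = \tr \cup \{j\}$. This is the combinatorial reservoir that will force a large aggregate betweenness on $\Omega_{j}$.

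Next I would establish the key geometric observation: for every $u \in Q_{j}$, the geodesic distance $d_{G}(u,i)$ equals $2$ (since $u \sim j \sim i$ and $u \not\sim i$ because $u \notin \tr$), and every length-$2$ path from $u$ to $i$ has its unique interior vertex in $\Omega_{j}$. Indeed, an internal vertex $w$ satisfies $w \in S_{1}(u) \cap S_{1}(i)$; if $w = j$ then $w \in \Omega_{j}$; if $w \neq j$ and $w \sim j$ then $w \in S_{1}(i) \cap S_{1}(j) = \tr \subset \Omega_{j}$; if $w \neq j$ and $w \not\sim j$ then $w$ witnesses $u \in \sqj$, contradicting $u \in Q_{j}$. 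So every shortest $u$--$i$ path has its interior node in $\Omega_{j}$.

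From this it follows that for each $u \in Q_{j}$, $\sum_{k \in \Omega_{j}} \sigma_{ui}(k) = \sigma_{ui}$, where $\sigma_{ui}(k)$ counts shortest $u$--$i$ paths through $k$ and $\sigma_{ui}$ counts all shortest $u$--$i$ paths. Hence the ordered pair $(u,i)$ contributes $\sum_{k \in \Omega_{j}} \sigma_{ui}(k)/\sigma_{ui} = 1$ to $\sum_{k \in \Omega_{j}} b(k)$ (note $u,i \notin \Omega_{j}$ so no vertex is excluded from the sum). Summing over the $\lvert Q_{j}\rvert$ choices of $u \in Q_{j}$ yields
\[
\sum_{k \in \Omega_{j}} b(k) \;\geq\; \lvert Q_{j} \rvert \;>\; \delta^{-1}\,\lvert \Omega_{j}\rvert,
\]
which after dividing by $\lvert \Omega_{j}\rvert$ gives the claim.

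The only subtle step is the structural claim in the second paragraph: it is essentially a careful unpacking of the definitions of $\tr$, $\sqj$, and $Q_{j}$, making sure the dichotomy on the interior vertex $w$ is exhaustive. Once that is in place, the betweenness bookkeeping is routine because every shortest path between a $Q_{j}$ node and $i$ has length exactly $2$, so each such ordered pair contributes precisely $1$ to the sum over $\Omega_{j}$, and Lemma~\ref{lemma:preliminarycurvature} closes the argument.
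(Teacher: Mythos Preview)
Your proof is correct and follows essentially the same route as the paper: both bound $\sum_{k\in\Omega_j} b(k)$ from below by the contributions of the pairs $(u,i)$ with $u\in Q_j$, use that every shortest $u$--$i$ path has its interior vertex in $\Omega_j$ (exactly your case analysis on $w$), obtain $\sum_{k\in\Omega_j} b(k)\geq |Q_j|$, and finish with Lemma~\ref{lemma:preliminarycurvature}. The paper separates the contribution of $j$ from that of $\tr$ and introduces the subset $\hat{Q}_j=\{w\in Q_j:\sigma_{iw}>1\}$ before recombining, whereas you sum over $\Omega_j$ directly; your packaging is slightly more economical but the argument is the same.
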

\begin{proof}
We rewrite the quantity in the statement as
\[
\frac{1}{\lvert \tr \rvert + 1}\left(\sum_{k\in \tr}b(k) + b(j)\right). 
\]
\noindent By definition, given a node $k\in V$, the betweenness centrality of $k$ is given by
\[
b(k) := \sum_{s,t\in V: s\neq k,t\neq k} \frac{\sigma_{st}(k)}{\sigma_{st}}
\] 
\noindent where $\sigma_{st}$ is the number of minimal paths between $s$ and $t$ while $\sigma_{st}(k)$ is the number of minimal paths from $s$ to $t$ passing through $k$. For convenience, we introduce the set $\hat{Q_{j}}\subset Q_{j}$ defined by
\[
\hat{Q}_{j} := \{w\in Q_{j}:\,\, \sigma_{iw} > 1\}.
\]
\noindent Equivalently, $\hat{Q}_{j}$ consists of those nodes in $S_{1}(j)\setminus S_{1}(i)$ which form a 4-cycle based at $i\sim j$ with a diagonal inside. Indeed, if $w\in Q_{j}$ and $\sigma_{iw} > 1$, then there exists more than one minimal path between $i$ and $w$, in addition to the one passing through node $j$. For any such path there exists $k\in S_{1}(i)\cap S_{1}(w)$. Since $w\in Q_{j}$ and $Q_{j}\cap \sqj = \emptyset$, we derive that $k \in S_{1}(j)$ as well.
We then get
\[
\sum_{k\in \tr}b(k)= \sum_{k\in\tr} \sum_{s,t\in V: s\neq k,t\neq k} \frac{\sigma_{st}(k)}{\sigma_{st}} \geq \sum_{k\in\tr} \sum_{w\in \hat{Q_{j}}} \frac{\sigma_{iw}(k)}{\sigma_{iw}} = \sum_{w\in \hat{Q_{j}}}\frac{1}{\sigma_{iw}}\sum_{k\in\tr} \sigma_{iw}(k).
\]
\noindent By summing $\sigma_{iw}(k)$ for all $k\in\tr$ we obtain all the 2-long minimal paths between $i$ and $w$ with the exception of the one passing through $j$:
\begin{equation}\label{eq:preliproof5}
\sum_{k\in \tr}b(k) \geq \sum_{w\in \hat{Q_{j}}}\frac{1}{\sigma_{iw}}(\sigma_{iw} - 1). 
\end{equation}
\noindent On the other hand we also have
\begin{equation}\label{eq:preliproof52}
b(j) =  \sum_{s,t\in V: s\neq j,t\neq j} \frac{\sigma_{st}(j)}{\sigma_{st}} \geq \sum_{z\in Q_{j}}\frac{\sigma_{iz}(j)}{\sigma_{iz}} = \sum_{z\in Q_{j}}\frac{1}{\sigma_{iz}}.
\end{equation}
\noindent By combining \eqref{eq:preliproof5} and \eqref{eq:preliproof52} we finally get
\begin{align*}
\frac{1}{\lvert \tr \rvert + 1}\left(\sum_{k\in \tr}b(k) + b(j)\right) &\geq \frac{1}{\lvert \tr \rvert + 1}\left(\sum_{w\in \hat{Q_{j}}}\frac{1}{\sigma_{iw}}(\sigma_{iw} - 1) + \sum_{z\in Q_{j}}\frac{1}{\sigma_{iz}}\right) \\ &= \frac{1}{\lvert \tr \rvert + 1}\left(\lvert \hat{Q}_{j}\rvert + \sum_{z\in Q_{j}\setminus \hat{Q}_{j}} \frac{1}{\sigma_{iz}}\right) \\ &= \frac{\lvert Q_{j} \rvert}{\lvert \tr \rvert + 1},
\end{align*}
\noindent where in the last equality we have used that by definition $\sigma_{iz} = 1$ for all $z\in Q_{j}\setminus \hat{Q}_{j}$. By Lemma \ref{lemma:preliminarycurvature} the last quantity is larger than $\delta^{-1}$.
\end{proof}



\section{Proofs of results in Section 4}
\label{app:proofs_section_4}
\diglcheeger*
\begin{proof}
Given a signal $f:V\rightarrow \R$ on the vertex set and $U\subset V$, analogously to \cite{chung2007four}, we introduce the notation
\[
f(U) := \sum_{i\in U}f(i).
\]
Let us rewrite the new Cheeger constant $h_{S,\alpha}$ as follows:
\[
h_{S,\alpha} = \frac{1}{\lvert S \rvert}\sum_{i\in S,j\in\bar{S}}(R_{\alpha})_{ij} =\frac{1}{\lvert S \rvert} \chi_{S}R_{\alpha}(\bar{S})
\]
\noindent with $\chi_{S}$ the characteristic function of the subset $S$, i.e. $\chi_{S}(i) = 1$ iff $i\in S$. Since the graph $G$ is connected, we can bound $h_{S,\alpha}$ from above as 
\[
\frac{1}{\lvert S \rvert} \chi_{S}R_{\alpha}(\bar{S}) = \frac{1}{\lvert S \rvert}\sum_{i\in S,j\in\bar{S}}(R_{\alpha})_{ij} \leq \frac{1}{\lvert S \rvert}\sum_{i\in S,j\in\bar{S}}(R_{\alpha})_{ij}\frac{d_{i}}{d_{\text{min}}(S)} = \frac{1}{\lvert S \rvert}\chi_{S}DR_{\alpha}(\bar{S})\frac{1}{d_{\text{min}}(S)}.
\]
\noindent It was proven in \cite[Lemma 5]{chung2007four} that 
\begin{equation}\label{preliminaryproof7}
\chi_{S}DR_{\alpha}(\bar{S}) \leq \frac{1-\alpha}{\alpha}\lvert \partial S \rvert.
\end{equation}
\noindent By applying equation \eqref{preliminaryproof7} to the bound for the Cheeger constant $h_{S,\alpha}$ we finally see that
\begin{align*}
h_{S,\alpha} &= \frac{1}{\lvert S\rvert}\chi_{S}R_{\alpha}(\bar{S}) \leq \frac{1}{\lvert S \rvert}\chi_{S}DR_{\alpha}(\bar{S})\frac{1}{d_{\text{min}}(S)} \leq \frac{1}{\lvert S \rvert}\frac{1-\alpha}{\alpha}\,\lvert \partial S \rvert\frac{1}{d_{\text{min}}(S)} \\ &= \frac{1}{\lvert S \rvert}\frac{1-\alpha}{\alpha}\,h_{S}\text{vol}(S)\frac{1}{d_{\text{min}}(S)} = \frac{1-\alpha}{\alpha}\,h_{S}\frac{d_{\text{avg
}}(S)}{d_{\text{min}}(S)}.
\end{align*}
\end{proof}
We also report an equivalent result, again relying on \cite[Lemma 5]{chung2007four}.
\begin{proposition}
Let $S\subset V$ with $\text{vol}(S)\leq \text{vol}(G)/2$. For any $k\in \mathbb{N}$, there exists $S_{k,\alpha}\subset S$ with $\text{vol}(S_{k,\alpha}) \geq \text{vol}(S)(1 - (2k)^{-1})$ such that
\[
\sum_{j\in\bar{S}}(R_{\alpha})_{ij} \leq k\frac{1-\alpha}{\alpha}h_{S},
\]
\noindent for all $i\in S_{k,\alpha}$.
\end{proposition}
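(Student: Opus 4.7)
The plan is to prove this by a degree-weighted Markov argument applied to the same Chung-type inequality that drove the proof of Theorem \ref{thm:DIGLbound}. The intuition is the following: Lemma 5 of \cite{chung2007four} controls the \emph{total} flow $\sum_{i\in S} d_i r_i$ (with $r_i := \sum_{j\in\bar{S}}(R_\alpha)_{ij}$) from $S$ to $\bar{S}$ under $R_\alpha$, so only a small-volume subset of $S$ can afford to carry $r_i$ much larger than the degree-weighted average.

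First, I would rewrite the bound from Lemma 5 of \cite{chung2007four} in summation form, exactly as it appeared in the proof of Theorem \ref{thm:DIGLbound}:
\[
\sum_{i\in S} d_i r_i \;=\; \chi_S D R_\alpha(\bar S) \;\leq\; \frac{1-\alpha}{\alpha}\lvert \partial S \rvert \;=\; \frac{1-\alpha}{\alpha}\,h_S\,\text{vol}(S),
\]
where in the last equality I used $\lvert\partial S\rvert = h_S\,\text{vol}(S)$.

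Next, I would fix a threshold $T>0$ (to be chosen below) and define the ``bad'' set
\[
B \;:=\; \{\, i\in S : r_i > T\,\}, \qquad S_{k,\alpha} \;:=\; S\setminus B.
\]
By construction every $i\in S_{k,\alpha}$ satisfies $r_i\leq T$, and a degree-weighted Markov inequality yields
\[
T\cdot\text{vol}(B) \;<\; \sum_{i\in B} d_i r_i \;\leq\; \sum_{i\in S} d_i r_i \;\leq\; \frac{1-\alpha}{\alpha}\,h_S\,\text{vol}(S),
\]
so that $\text{vol}(B) < \frac{1-\alpha}{\alpha T}\,h_S\,\text{vol}(S)$. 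Choosing $T$ proportional to $k(1-\alpha)h_S/\alpha$ with the appropriate multiplicative constant simultaneously enforces the pointwise bound $r_i\leq k(1-\alpha)h_S/\alpha$ on $S_{k,\alpha}$ and the volume bound $\text{vol}(B)\leq\text{vol}(S)/(2k)$, which is exactly the claim.

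I do not expect any substantial obstacle: the argument is essentially a one-line application of Markov's inequality to the integrated bound already used in the proof of Theorem \ref{thm:DIGLbound}. The only delicate point is calibrating the constant in $T$ so that both the pointwise bound $r_i \leq k(1-\alpha)h_S/\alpha$ and the volume fraction $1-(2k)^{-1}$ come out exactly as stated; this is pure bookkeeping once the above skeleton is in place.
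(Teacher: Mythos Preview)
Your approach is essentially the same as the paper's: define the ``bad'' set where $r_i$ exceeds the threshold $k\frac{1-\alpha}{\alpha}h_S$, then use a degree-weighted averaging (Markov) argument together with Chung's Lemma~5 to bound its volume, and take $S_{k,\alpha}$ to be the complement.

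There is, however, one point that is not just bookkeeping. The integrated bound you import from the proof of Theorem~\ref{thm:DIGLbound} reads $\chi_S D R_\alpha(\bar S)\leq \frac{1-\alpha}{\alpha}\lvert\partial S\rvert$. If you set $T=k\frac{1-\alpha}{\alpha}h_S$ and apply Markov, you obtain $\text{vol}(B)\leq \text{vol}(S)/k$, i.e.\ $\text{vol}(S_{k,\alpha})\geq \text{vol}(S)(1-k^{-1})$, which is off by a factor of $2$ from the stated $(1-(2k)^{-1})$. There is no choice of $T$ that simultaneously gives the pointwise bound $r_i\leq k\frac{1-\alpha}{\alpha}h_S$ and the volume bound $\text{vol}(B)\leq \text{vol}(S)/(2k)$ from that inequality alone. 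The paper's proof does not use the black-box form of Chung's Lemma~5 here; it explicitly says that by ``modifying slightly the argument'' in that lemma one gets the sharper estimate
\[
\sum_{i\in S'_{k,\alpha}} d_i\, r_i \;\leq\; \frac{1-\alpha}{2\alpha}\lvert \partial S\rvert,
\]
and it is this extra factor of $2$ (coming from re-running Chung's argument rather than quoting its conclusion) that delivers the exact constants. So your skeleton is right, but you should either re-open Chung's proof to recover the $1/2$, or acknowledge that the version of the lemma you cited only yields $1-k^{-1}$.
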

\begin{proof}
Let $k\in\mathbb{N}$. By modifying slightly the argument in \cite[Lemma 5]{chung2007four}, we derive that
\[
S_{k,\alpha}^{\prime}:= \{i\in S: \chi_{i}R_{\alpha}(\bar{S}) \geq k\frac{1-\alpha}{\alpha}\,h_{S}\}
\]
\noindent satisfies 
\[
\frac{1-\alpha}{2\alpha}\lvert \partial S \rvert \geq \text{vol}(S_{k,\alpha}^{\prime})k\frac{1-\alpha}{\alpha}h_{S}.
\]
\noindent Therefore, we obtain
\[
\text{vol}(S_{k,\alpha}^{\prime}) \leq \frac{1}{k}\frac{\text{vol}(S)}{2}.
\]
\noindent We then conclude that the complement of $S_{k,\alpha}^{\prime}$ has volume greater or equal than $\text{vol}(S)(1 - (2k)^{-1})$, which completes the proof.
\end{proof}
\begin{remark}
The previous proposition shows that after sparsifying the personalized page rank operator $R_{\alpha}$ as suggested in \cite{klicpera2019diffusion} by setting entries below some threshold equal to zero, there will still be only few edges connecting different communities, once again highlighting that random-walk based methods are generally not suited to tackle the graph bottleneck.
\end{remark}

\section{Experiments}
\label{app:experiments}
Our experiments in this paper are semi-supervised node classification (semi-supervised in that the graph structure provides some unlabelled information) on nine common graph learning datasets. Cornell, Texas and Wisconsin are small heterophilic datasets based on webpage networks from the WebKB dataset. Chameleon and Squirrel \citep{rozemberczki2021multi} are medium heterophilic datasets based on Wikipedia networks, along with Actor, the actor-only induced subgraph of the film-director-actor-writer network \citep{tangactor2009}. Cora \citep{mccallum2000automating}, Citeseer \citep{sen2008collective} and Pubmed \citep{namata2012query} are medium homophilic datasets based on citation networks. As in \cite{klicpera2019diffusion}, for all experiments we consider the largest connected component of the graph.

When splitting the data into train/validation/test sets, we first separate the data into a development set and the test set. This is done once to ensure the test set is not used for any training or hyperparameter fitting before the final evaluation. For each of the 100 random splits the development set is divided randomly into a train set and a validation set, where we train models on the train set and evaluate on the validation set. We fit hyperparameters by random search, maximising the mean accuracy across the validation sets. The accuracy then reported in Table \ref{tab:results} is the mean accuracy on the test set from models trained on the train sets with the chosen hyperparameters, along with a 95\% confidence interval calculated by bootstrapping the test set accuracies with 1000 samples. For Cora, Citeseer and Pubmed the development set contains 1500 nodes with the rest kept for the test set, and for each random split the train set is chosen to contain 20 nodes of each class while the rest form the validation set. As this is the same method as \cite{klicpera2019diffusion} and we use the same random seeds, we are using the same test set and expect to have comparable results. For the remaining datasets we perform a 60/20/20 split, with 20\% of nodes set aside as the test set and then for each random split the remaining 80\% is split into 60\% training, 20\% validation.

The homophily index $\mathcal{H}(G)$ proposed by \cite{pei2020geom} is defined as
\begin{equation}
\label{eq:homophily_index}
    \mathcal{H}(G) = \frac{1}{|V|}\sum_{v \in V}\frac{\text{Number of $v$'s neighbors who have the same label as $v$}}{\text{Number of $v$'s neighbors}}.
\end{equation}

\subsection{Datasets}
\label{app:datasets}

For datasets with disconnected graphs, the statistics shown here are for the largest connected component.

\begin{adjustbox}{center}
\centering
\small
\begin{tabular}{lccccccccc} 
\toprule
                 & Cornell & Texas & Wisconsin & Chameleon & Squirrel & Actor & Cora & Citeseer & Pubmed  \\ 
\toprule
$\mathcal{H}(G)$ & 0.11    & 0.06  & 0.16      & 0.25      & 0.22     & 0.24  & 0.83 & 0.72     & 0.79    \\
Nodes            & 140     & 135   & 184       & 832       & 2186     & 4388  & 2485 & 2120     & 19717   \\
Edges            & 219     & 251   & 362       & 12355     & 65224    & 21907 & 5069 & 3679     & 44324   \\
Features         & 1703    & 1703  & 1703      & 2323      & 2089     & 931   & 1433 & 3703     & 500     \\
Classes          & 5       & 5     & 5         & 5         & 5        & 5     & 7    & 6        & 3       \\
Directed?        & Yes     & Yes   & Yes       & Yes       & Yes      & Yes   & No   & No       & No      \\
\bottomrule
\end{tabular}
\end{adjustbox}

\newpage
\subsection{Degree distributions}
\label{app:degrees}

\begin{figure}[h]
\captionsetup{singlelinecheck=off}
\begin{center}
	\begin{subfigure}[t]{0.325\textwidth}
	\includegraphics[width=\linewidth]{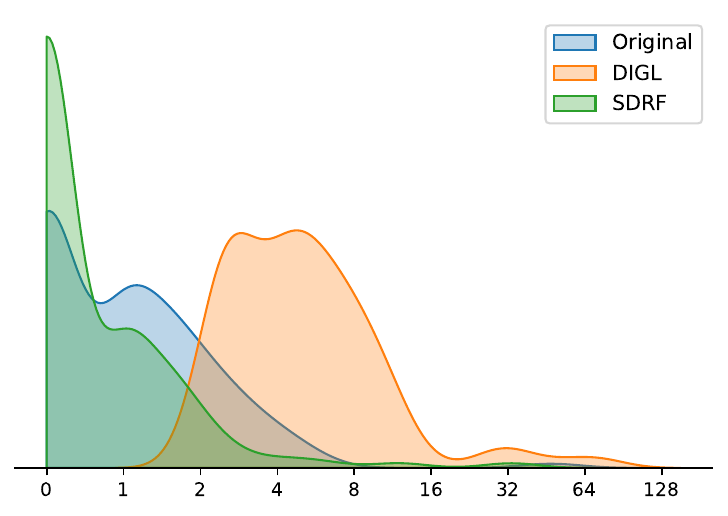}
	\caption{\centering Cornell:\small \vspace{-1mm}\begin{align*}W_1(\text{Original}, \text{DIGL})&=10.99\\ W_1(\text{Original}, \text{SDRF})&=1.01\end{align*}}
     \end{subfigure}
	\begin{subfigure}[t]{0.325\textwidth}
	\includegraphics[width=\linewidth]{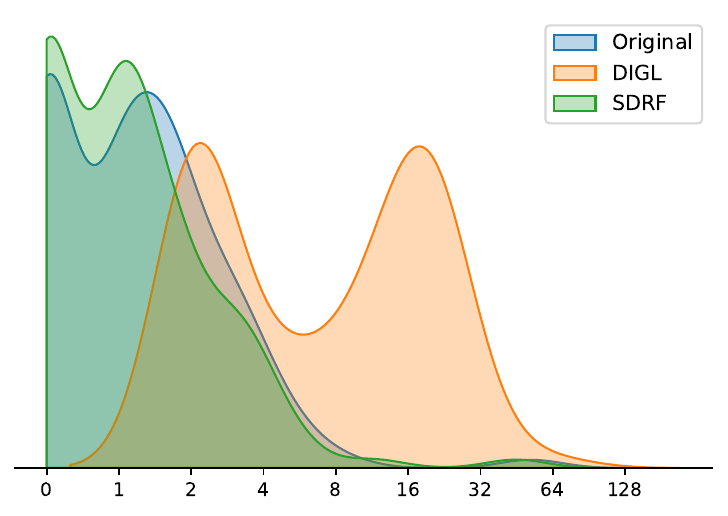}
	\caption{\centering Texas:\small \vspace{-1mm}\begin{align*}W_1(\text{Original}, \text{DIGL})&=17.98\\ W_1(\text{Original}, \text{SDRF})&=0.39\end{align*}}
     \end{subfigure}
	\begin{subfigure}[t]{0.325\textwidth}
	\includegraphics[width=\linewidth]{resources/deg_Wisconsin.pdf}
	\caption{\centering Wisconsin:\small \vspace{-1mm}\begin{align*}W_1(\text{Original}, \text{DIGL})&=11.83\\ W_1(\text{Original}, \text{SDRF})&=0.28\end{align*}}
     \end{subfigure}
\vspace{-3mm}
	\begin{subfigure}[t]{0.325\textwidth}
	\includegraphics[width=\linewidth]{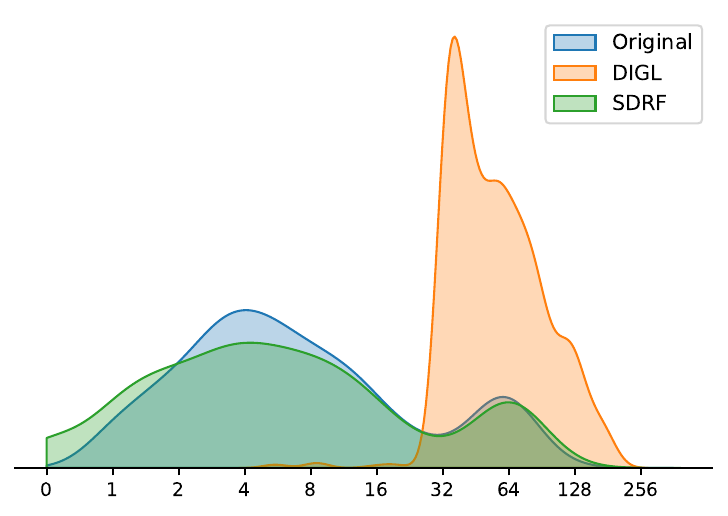}
	\caption{\centering Chameleon:\small \vspace{-1mm}\begin{align*}W_1(\text{Original}, \text{DIGL})&=96.30\\ W_1(\text{Original}, \text{SDRF})&=1.92\end{align*}}
     \end{subfigure}
	\begin{subfigure}[t]{0.325\textwidth}
	\includegraphics[width=\linewidth]{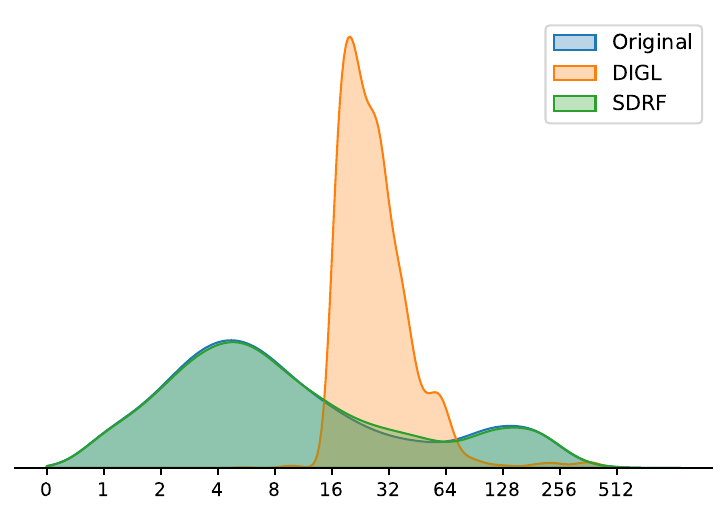}
	\caption{\centering Squirrel:\small \vspace{-1mm}\begin{align*}W_1(\text{Original}, \text{DIGL})&=50.88\\ W_1(\text{Original}, \text{SDRF})&=1.54\end{align*}}
     \end{subfigure}
	\begin{subfigure}[t]{0.325\textwidth}
	\includegraphics[width=\linewidth]{resources/deg_Actor.pdf}
	\caption{\centering Actor:\small \vspace{-1mm}\begin{align*}W_1(\text{Original}, \text{DIGL})&=831.88\\ W_1(\text{Original}, \text{SDRF})&=0.28\end{align*}}
     \end{subfigure}
\vspace{-3mm}
	\begin{subfigure}[t]{0.325\textwidth}
	\includegraphics[width=\linewidth]{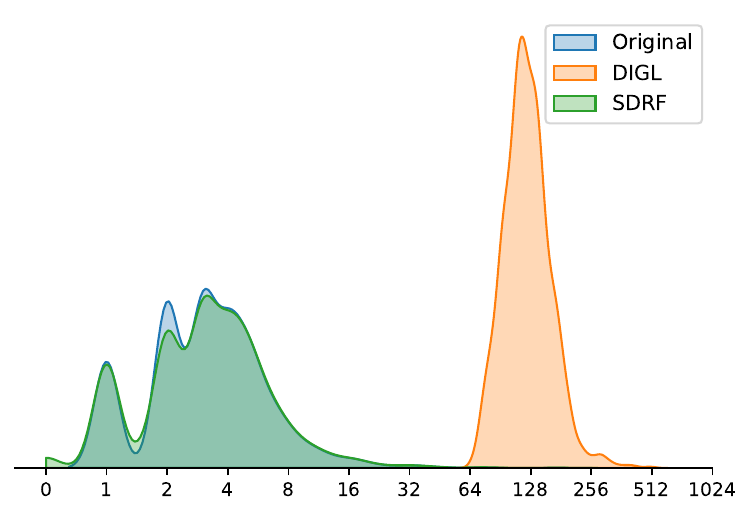}
	\caption{\centering Cora:\small \vspace{-1mm}\begin{align*}W_1(\text{Original}, \text{DIGL})&=247.84\\ W_1(\text{Original}, \text{SDRF})&=0.14\end{align*}}
     \end{subfigure}
	\begin{subfigure}[t]{0.325\textwidth}
	\includegraphics[width=\linewidth]{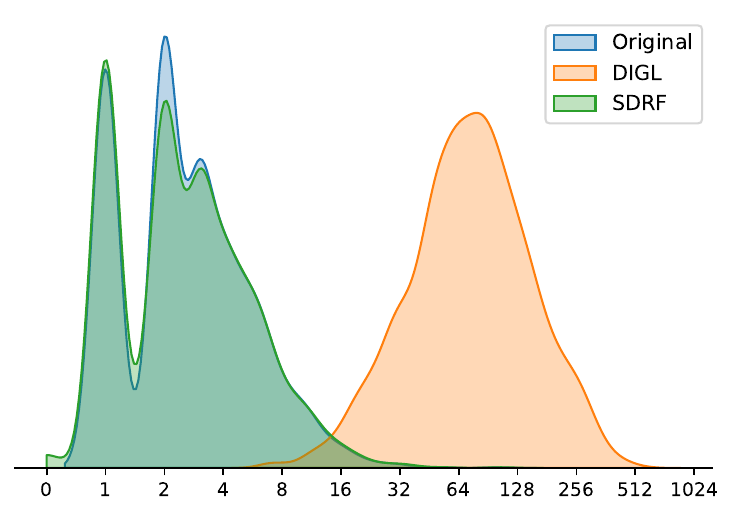}
	\caption{\centering Citeseer:\small \vspace{-1mm}\begin{align*}W_1(\text{Original}, \text{DIGL})&=178.28\\ W_1(\text{Original}, \text{SDRF})&=0.15\end{align*}}
     \end{subfigure}
	\begin{subfigure}[t]{0.325\textwidth}
	\includegraphics[width=\linewidth]{resources/deg_Pubmed.pdf}
	\caption{\centering Pubmed:\small \vspace{-1mm}\begin{align*}W_1(\text{Original}, \text{DIGL})&=247.01\\ W_1(\text{Original}, \text{SDRF})&=0.03\end{align*}}
     \end{subfigure}
\caption{Comparing the degree distribution of the original graphs to the preprocessed version. The x-axis is node degree in $\text{log}_2$ scale, and the plots are a kernel density estimate of the degree distribution. In the captions we see the Wasserstein distance $W_1$ between the original and preprocessed graphs.}
\vspace{-3mm}
\end{center}
\end{figure}

\newpage

\subsection{Visualizing curvature and sensitivity to features}
\label{app:extra_fig}

\begin{figure}[h]
\begin{center}
	\includegraphics[width=\linewidth]{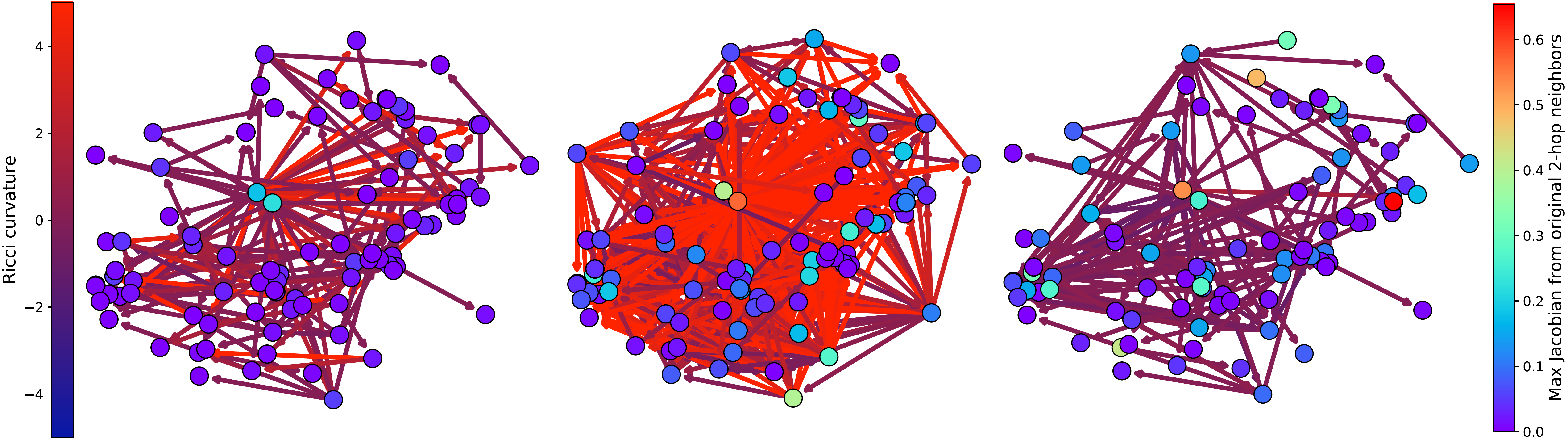}\vspace{-1mm}
\caption{Rewiring of the Cornell graph. Left-to-right: original graph, DIGL, and SDRF rewiring. Edges are colored by curvature; nodes are colored by the maximum absolute entry of a trained 2-layer GCN's Jacobian between the GCN's prediction for that node and the features of the nodes 2 hops away in the original graph. SDRF homogenizes curvature and so lifts the upper bound on the Jacobian from Theorem \ref{thm:jacobiancurvaturebound}. DIGL also does to an extent, though at the expense of preserving graph topology.\vspace{-1mm}}
\label{fig:cornell_rewiring}
\end{center}
\end{figure}

\subsection{Hyperparameters}
\label{app:hyperparameters}

\begin{table}[h]
\centering
\small
\caption{Hyperparameters for GCN with no preprocessing (None).}
\begin{tabular}{lccccc} 
\toprule
Dataset   & Dropout                                    & \begin{tabular}[c]{@{}c@{}}Hidden\\depth\end{tabular} & \begin{tabular}[c]{@{}c@{}}Hidden\\dimension\end{tabular} & \begin{tabular}[c]{@{}c@{}}Learning\\rate\end{tabular} & \begin{tabular}[c]{@{}c@{}}Weight\\decay\end{tabular}  \\ 
\toprule
Cornell   & 0.3060\textcolor[rgb]{0.212,0.224,0.239}{} & 1                                                     & 128                                                       & 0.0082                                                 & 0.1570                                                 \\
Texas     & 0.2346                                     & 1                                                     & 128                                                       & 0.0072                                                 & 0.0037                                                 \\
Wisconsin & 0.2869                                     & 1                                                     & 64                                                        & 0.0281                                                 & 0.0113                                                 \\
Chameleon & 0.3300                                     & 2                                                     & 128                                                       & 0.0230                                                 & 0.0001                                                 \\
Squirrel  & 0.7236                                     & 3                                                     & 32                                                        & 0.0293                                                 & 0.0129                                                 \\
Actor     & 0.2294                                     & 2                                                     & 128                                                       & 0.0102                                                 & 0.0763                                                 \\
Cora      & 0.4144                                     & 1                                                     & 64                                                        & 0.0097                                                 & 0.0639                                                 \\
Citeseer  & 0.7477                                     & 1                                                     & 128                                                       & 0.0251                                                 & 0.4577                                                 \\
Pubmed    & 0.4013                                     & 1                                                     & 64                                                        & 0.0095                                                 & 0.0448                                                 \\
\bottomrule
\end{tabular}
\end{table}
\vspace{1cm}
\begin{table}[h]
\centering
\small
\caption{Hyperparameters for GCN with the input graph made undirected (Undirected).}
\begin{tabular}{lccccc} 
\toprule
Dataset   & Dropout                                    & \begin{tabular}[c]{@{}c@{}}Hidden\\depth\end{tabular} & \begin{tabular}[c]{@{}c@{}}Hidden\\dimension\end{tabular} & \begin{tabular}[c]{@{}c@{}}Learning\\rate\end{tabular} & \begin{tabular}[c]{@{}c@{}}Weight\\decay\end{tabular}  \\ 
\toprule
Cornell   & 0.6910\textcolor[rgb]{0.212,0.227,0.239}{} & 1                                                     & 64                                                        & 0.0185                                                 & 0.0285                                                 \\
Texas     & 0.2665                                     & 1                                                     & 128                                                       & 0.0069                                                 & 0.0035                                                 \\
Wisconsin & 0.2893                                     & 2                                                     & 128                                                       & 0.0142                                                 & 0.0001                                                 \\
Chameleon & 0.7546                                     & 3                                                     & 16                                                        & 0.0219                                                 & 0.0004                                                 \\
Squirrel  & 0.6120                                     & 2                                                     & 128                                                       & 0.0266                                                 & 0.3935                                                 \\
Actor     & 0.5249                                     & 3                                                     & 128                                                       & 0.0179                                                 & 0.4875                                                 \\
\bottomrule
\end{tabular}
\end{table}
\newpage
\begin{table}[h]
\centering
\small
\caption{Hyperparameters for GCN with the last layer made fully connected (+FA from \cite{alon2020bottleneck}).}
\begin{tabular}{lccccc} 
\toprule
Dataset   & Dropout & \begin{tabular}[c]{@{}c@{}}Hidden\\depth\end{tabular} & \begin{tabular}[c]{@{}c@{}}Hidden\\dimension\end{tabular} & \begin{tabular}[c]{@{}c@{}}Learning\\rate\end{tabular} & \begin{tabular}[c]{@{}c@{}}Weight\\decay\end{tabular}  \\ 
\cmidrule[\heavyrulewidth]{1-1}\cmidrule[\heavyrulewidth]{2-2}\cmidrule[\heavyrulewidth]{3-6}
Cornell   & 0.2643  & 1                                                     & 128                                                       & 0.0216                                                 & 0.0760                                                 \\
Texas     & 0.2207  & 1                                                     & 128                                                       & 0.0102                                                 & 0.4450                                                 \\
Wisconsin & 0.2613  & 3                                                     & 64                                                        & 0.0057                                                 & 0.0131                                                 \\
Chameleon & 0.4524  & 1                                                     & 64                                                        & 0.0097                                                 & 0.0197                                                 \\
Squirrel  & 0.3697  & 3                                                     & 64                                                        & 0.0171                                                 & 0.0670                                                 \\
Actor     & 0.7800  & 2                                                     & 128                                                       & 0.0078                                                 & 0.0134                                                 \\
Cora      & 0.7840  & 2                                                     & 128                                                       & 0.0149                                                 & 0.1429                                                 \\
Citeseer  & 0.5460  & 2                                                     & 64                                                        & 0.0066                                                 & 0.0758                                                 \\
Pubmed    & 0.3376  & 2                                                     & 128                                                       & 0.0204                                                 & 0.0215                                                 \\
\bottomrule
\end{tabular}
\end{table}

\begin{table}[h]
\centering
\small
\caption{Hyperparameters for GCN with DIGL preprocessing, or Graph Diffusion Convolution with PPR kernel (DIGL). Descriptions for $\alpha$, $k$ and $\epsilon$ can be found in \cite{klicpera2019diffusion}.}
\begin{tabular}{lcccccccc} 
\toprule
Dataset   & Dropout & \begin{tabular}[c]{@{}c@{}}Hidden\\depth\end{tabular} & \begin{tabular}[c]{@{}c@{}}Hidden\\dimension\end{tabular} & \begin{tabular}[c]{@{}c@{}}Learning\\rate\end{tabular} & \begin{tabular}[c]{@{}c@{}}Weight\\decay\end{tabular} & $\alpha$ & $k$ & $\epsilon$  \\ 
\toprule
Cornell   & 0.6294  & 1                                                     & 64                                                        & 0.0134                                                 & 0.0258                                                & 0.1795   & 64  & -           \\
Texas     & 0.2382  & 2                                                     & 128                                                       & 0.0063                                                 & 0.0153                                                & 0.0206   & 32  & -           \\
Wisconsin & 0.2941  & 1                                                     & 128                                                       & 0.0083                                                 & 0.0226                                                & 0.1246   & -   & 0.0001      \\
Chameleon & 0.4191  & 1                                                     & 128                                                       & 0.0052                                                 & 0.0001                                                & 0.0244   & 64  & -           \\
Squirrel  & 0.6844  & 1                                                     & 128                                                       & 0.0056                                                 & 0.4537                                                & 0.0395   & 32  & -           \\
Actor     & 0.3008  & 1                                                     & 128                                                       & 0.0163                                                 & 0.1545                                                & 0.0656   & -  & 0.0003           \\
Cora      & 0.3315  & 1                                                     & 64                                                        & 0.0284                                                 & 0.0572                                                & 0.0773   & 128 & -           \\
Citeseer  & 0.5561  & 1                                                     & 64                                                        & 0.0094                                                 & 0.5013                                                & 0.1076   & -   & 0.0008      \\
Pubmed    & 0.4915  & 2                                                     & 128                                                       & 0.0057                                                 & 0.0597                                                & 0.1155   & 128 & -           \\
\bottomrule
\end{tabular}
\end{table}

\begin{table}[h]
\centering
\small
\caption{Hyperparameters for GCN with DIGL preprocessing followed by symmetrizing the graph diffusion matrix (DIGL + Undirected).}
\begin{tabular}{lcccccccc} 
\toprule
Dataset   & Dropout & \begin{tabular}[c]{@{}c@{}}Hidden\\depth\end{tabular} & \begin{tabular}[c]{@{}c@{}}Hidden\\dimension\end{tabular} & \begin{tabular}[c]{@{}c@{}}Learning\\rate\end{tabular} & \begin{tabular}[c]{@{}c@{}}Weight\\decay\end{tabular} & $\alpha$ & $k$ & $\epsilon$  \\ 
\toprule
Cornell   & 0.6294  & 1                                                     & 64                                                        & 0.0134                                                 & 0.0258                                                & 0.1795   & 64  & -           \\
Texas     & 0.2382  & 2                                                     & 128                                                       & 0.0063                                                 & 0.0153                                                & 0.0206   & 32  & -           \\
Wisconsin & 0.2941  & 1                                                     & 128                                                       & 0.0083                                                 & 0.0226                                                & 0.1246   & -   & 0.0001      \\
Chameleon & 0.4191  & 1                                                     & 128                                                       & 0.0052                                                 & 0.0001                                                & 0.0244   & 64  & -           \\
Squirrel  & 0.6844  & 1                                                     & 128                                                       & 0.0056                                                 & 0.4537                                                & 0.0395   & 16  & -           \\
Actor     & 0.3008  & 1                                                     & 128                                                       & 0.0163                                                 & 0.1545                                                & 0.0656   & 16  & -           \\
Cora      & 0.3315  & 1                                                     & 64                                                        & 0.0284                                                 & 0.0572                                                & 0.0773   & 128 & -           \\
Citeseer  & 0.5561  & 1                                                     & 64                                                        & 0.0094                                                 & 0.5013                                                & 0.1076   & -   & 0.0008      \\
Pubmed    & 0.4915  & 2                                                     & 128                                                       & 0.0057                                                 & 0.0597                                                & 0.1155   & 128 & -           \\
\bottomrule
\end{tabular}
\end{table}

\begin{table}[h]
\centering
\small
\caption{Hyperparameters for GCN with SDRF preprocessing (SDRF). Max iterations, $\tau$ and $C^+$ are the SDRF parameters described in Algorithm \ref{alg:3_sdrf}.}
\begin{tabular}{lcccccccc} 
\toprule
Dataset   & Dropout & \begin{tabular}[c]{@{}c@{}}Hidden\\depth\end{tabular} & \begin{tabular}[c]{@{}c@{}}Hidden\\dimension\end{tabular} & \begin{tabular}[c]{@{}c@{}}Learning\\rate\end{tabular} & \begin{tabular}[c]{@{}c@{}}Weight\\decay\end{tabular} & \begin{tabular}[c]{@{}c@{}}Max\\iterations\end{tabular} & $\tau$ & $C^+$  \\ 
\toprule
Cornell   & 0.2411  & 1                                                     & 128                                                       & 0.0172                                                 & 0.0125                                                & 135                                                     & 130    & 0.25   \\
Texas     & 0.5954  & 1                                                     & 128                                                       & 0.0278                                                 & 0.0623                                                & 47                                                      & 172    & 2.25   \\
Wisconsin & 0.6033  & 1                                                     & 128                                                       & 0.0295                                                 & 0.1920                                                & 27                                                      & 32     & 0.5    \\
Chameleon & 0.5354  & 1                                                     & 128                                                       & 0.0170                                                 & 0.3422                                                & 699                                                     & 34     & 39.25  \\
Squirrel  & 0.6503  & 1                                                     & 32                                                        & 0.0287                                                 & 0.0163                                                & 2742                                                    & 61     & 0.5    \\
Actor     & 0.6963  & 1                                                     & 64                                                        & 0.0110                                                 & 0.0174                                                & 3823                                                    & 223    & 7.37   \\
Cora      & 0.3396  & 1                                                     & 128                                                       & 0.0244                                                 & 0.1076                                                & 100                                                     & 163    & 0.95   \\
Citeseer  & 0.4103  & 1                                                     & 64                                                        & 0.0199                                                 & 0.4551                                                & 84                                                      & 180    & 0.22   \\
Pubmed    & 0.3749  & 3                                                     & 128                                                       & 0.0112                                                 & 0.0138                                                & 166                                                     & 115    & 14.43  \\
\bottomrule
\end{tabular}
\end{table}
\vspace{2cm}
\begin{table}[h]
\centering
\small
\caption{Hyperparameters for GCN with the input graph made undirected followed by SDRF preprocessing (SDRF + Undirected).}
\begin{tabular}{lcccccccc} 
\toprule
Dataset   & Dropout & \begin{tabular}[c]{@{}c@{}}Hidden\\depth\end{tabular} & \begin{tabular}[c]{@{}c@{}}Hidden\\dimension\end{tabular} & \begin{tabular}[c]{@{}c@{}}Learning\\rate\end{tabular} & \begin{tabular}[c]{@{}c@{}}Weight\\decay\end{tabular} & \begin{tabular}[c]{@{}c@{}}Max\\iterations\end{tabular} & $\tau$ & $C^+$  \\ 
\toprule
Cornell   & 0.2911  & 1                                                     & 128                                                       & 0.0056                                                 & 0.0336                                                & 126                                                     & 145    & 0.88   \\
Texas     & 0.2160  & 1                                                     & 64                                                        & 0.0229                                                 & 0.0137                                                & 89                                                      & 22     & 1.64   \\
Wisconsin & 0.2452  & 1                                                     & 64                                                        & 0.0113                                                 & 0.1559                                                & 136                                                     & 12     & 7.95   \\
Chameleon & 0.4886  & 1                                                     & 32                                                        & 0.0268                                                 & 0.4056                                                & 2441                                                    & 252    & 2.84   \\
Squirrel  & 0.3079  & 1                                                     & 32                                                        & 0.0299                                                 & 0.0158                                                & 1396                                                    & 436    & 5.88   \\
Actor     & 0.3424  & 1                                                     & 64                                                        & 0.0129                                                 & 0.0126                                                & 3249                                                    & 106    & 7.91   \\
\bottomrule
\end{tabular}
\end{table}
\vspace{8mm}
\pagebreak

\section{Hardware specifications}
\label{sec:hardware_specs}
Our experiments were performed on a server with the following specifications:
\begin{table}[h]
\centering
\small
\begin{tabular}{lc}
\toprule
Component                      & Specification                                                               \\
\toprule
Architecture                            & x86\_64                                                                              \\
CPU                                     & \textcolor[rgb]{0.141,0.161,0.184}{40x Intel(R) Xeon(R) Silver 4210R CPU @ 2.40GHz}  \\
\textcolor[rgb]{0.141,0.161,0.184}{GPU} & \textcolor[rgb]{0.141,0.161,0.184}{4x GeForce RTX 3090 (24268MiB/GPU)}               \\
\textcolor[rgb]{0.141,0.161,0.184}{RAM} & \textcolor[rgb]{0.141,0.161,0.184}{126GB}                                            \\
\textcolor[rgb]{0.141,0.161,0.184}{OS}  & \textcolor[rgb]{0.141,0.161,0.184}{Ubuntu 20.04.2 LTS}\\
\bottomrule
\end{tabular}
\end{table}

\end{document}